\documentclass[11pt, a4paper, 
onecolumn, 
showabstract]{naverlabseurope-onecol-flex}

\setcitestyle{authoryear,round}

\usepackage[utf8]{inputenc}
\usepackage[T1]{fontenc}
\usepackage{amsmath,amssymb,amsthm,mathtools,bm,mathrsfs}
\usepackage{hyperref}
\usepackage{enumitem}
\usepackage{xspace}
\usepackage{booktabs}
\usepackage{graphicx}
\usepackage{xcolor}
\usepackage{alltt}
\usepackage{float}

\newtheorem{thm}{Theorem}[section]
\newtheorem{prop}[thm]{Proposition}
\newtheorem{cor}[thm]{Corollary}

\theoremstyle{remark}
\newtheorem{rem}[thm]{Remark}

\newenvironment{myhighlight}{\medskip\par\noindent\itshape}{\par\medskip}

\newcommand{\R}{\mathbb{R}}

\newcommand{\supp}{\mathrm{Supp}}
\newcommand{\KL}[2]{\mathrm{KL}\!\left(#1\,\|\,#2\right)}
\newcommand{\TVD}[2]{\mathrm{TVD}\!\left(#1,\,#2\right)}
\newcommand{\E}{\mathbb{E}}
\newcommand{\Y}{\mathcal{Y}}
\newcommand{\pb}[1][\beta]{p_{[#1]}}
\newcommand{\Zb}[1][\beta]{Z_{[#1]}}
\newcommand{\Jb}[1][\beta]{J_{[#1]}}

\newcommand{\pl}[1][\lambda]{p_{#1}}
\newcommand{\Zl}[1][\lambda]{Z_{#1}}

\newcommand{\muq}[1]{\mu_{#1}}

\newcommand{\Var}{\operatorname{Var}}

\newcommand{\pit}{\pi_\theta}

\renewcommand{\aa}{a}

\newcommand{\Aone}{A_1}
\newcommand{\Azero}{A_0}

\newcommand{\pp}{{p_*}}
\newcommand{\pihat}{\hat\pi_{\mathrm{FKL}}}

\newcommand{\PP}{\mathscr{P}}

\newcommand{\EE}{\mathcal{E}}

\newcommand{\blambda}{{\bm{\lambda}}}

\newcommand{\reinforce}{REINFORCE\xspace}

\DeclareMathOperator*{\argmax}{argmax}
\DeclareMathOperator*{\argmin}{argmin}

\authors{Marc Dymetman\textsuperscript{*}}
\affiliations{$^{*}$Scientific Consultant}
\website{\texttt{marc.dymetman@gmail.com}}
\websiteref{}

\title{Binary Rewards and Reinforcement Learning:\\ Fundamental Challenges}

\titlerunning{Binary Rewards and Reinforcement Learning: Fundamental Challenges}

\begin{abstract}
Reinforcement learning with verifiable rewards (RLVR) has become a
standard approach for improving reasoning in language models, yet
models trained with RLVR often suffer from diversity collapse: while
single-sample accuracy improves, multi-sample coverage degrades,
sometimes falling below the base model. We provide a structural account
of this phenomenon grounded in the properties of binary rewards.

Binary rewards create a fundamental degeneracy for policy gradient
methods: the set of distributions maximizing expected reward is
infinite, with no distinguished element. KL-control resolves this
degeneracy by selecting, in the limit $\beta\to 0$, the \emph{filtered
model} $\pp:=\aa(\cdot\mid\Y_1)$---the base model conditioned on
validity---which is the unique fully valid distribution closest to the
base model in KL divergence. This selection operates through a
nontrivial asymmetry: the tilted distribution
$\pb\propto\aa(y)\,e^{v(y)/\beta}$ converges to $\pp$ in forward KL
as $\beta\to 0$, yet $\pp$ cannot serve as a direct optimization target
because $\KL{q}{\pp}$ is infinite for any full-support policy $q$. We
develop explicit formulas relating the hyperparameter $\beta$ to the
more interpretable target validity rate $\mu$. Under model
misspecification---the typical practical regime---the pressure to
decrease $\beta$ drives the optimizer toward highly concentrated
distributions over a small number of valid outputs, collapsing toward
ever fewer as $\beta$ decreases, rather than toward the filtered model.
We illustrate this mechanism on a toy autoregressive experiment and
discuss how alternative divergences that target $\pp$ directly---as
pursued empirically by \citet{kruszewski_whatever_2026}---avoid this
failure mode by rewarding coverage of $\pp$'s support rather than
concentration on high-validity outputs.
\end{abstract}

\begin{document}
\maketitle

% ============================================================
\section{Introduction}
\label{sec:intro}
% ============================================================

Reinforcement learning with verifiable rewards (RLVR) has become a standard method for improving the reasoning capabilities of language models \citep{shao_deepseekmath_2024,guo_deepseek_2025}. In its simplest form, a binary verifier judges each model output as correct ($v(y)=1$) or incorrect ($v(y)=0$), and the policy is optimized to increase the proportion of correct outputs. The dominant algorithm, Group Relative Policy Optimization \citep[GRPO;][]{shao_deepseekmath_2024}, a variant of PPO \citep{schulman_proximal_2017} used to train DeepSeek-R1 \citep{guo_deepseek_2025}, optimizes precisely the KL-controlled objective studied in this note: a combination of expected reward and a KL penalty anchoring the policy to the base model, as popularized in the RLHF literature \citep{ouyang_training_2022}. Some recent work advocates dropping the KL penalty entirely \citep{yu_dapo_2025,liu_understanding_2025}, arguing that it is unnecessarily conservative for long chain-of-thought reasoning.

Growing evidence, however, shows that models trained with RLVR often suffer from a significant loss in diversity: while single-sample accuracy (pass@1) improves, multi-sample coverage (pass@$k$ for large~$k$) degrades, sometimes falling below the base model \citep{yue_does_2025,kruszewski_whatever_2026,li_choice_2025}. This suggests that RLVR does not so much create new reasoning capabilities as concentrate probability mass on a narrow subset of the base model's existing solutions---a phenomenon closely related to the mode collapse analyzed in the present note.

The simplest policy-gradient approach, \reinforce \citep{williams_simple_1992}, maximizes $\E_{\pit}[v(y)]$ by gradient ascent. 
With a binary verifier, however, this objective has a structural problem: the set of distributions achieving perfect validity ($\E_q v = 1$) is typically large, and the \reinforce objective is completely flat over this set. A Dirac mass on a single correct answer scores identically to a distribution that preserves the full diversity of the base model across all correct answers. Pure \reinforce has no mechanism to prefer one over the other.

KL-controlled optimization addresses this by adding a penalty $\beta\,\KL{\pit}{\aa}$ that anchors the policy to the base model. It is well known that for $\beta>0$, the optimal unconstrained distribution is a Gibbs distribution $\pb \propto \aa(y)\,e^{v(y)/\beta}$. We show that as $\beta\to 0$, this distribution converges to the \emph{filtered model} $\pp := \aa(\cdot\mid \Y_1)$---the base model conditioned on validity---which is the unique fully valid distribution closest to $\aa$ in KL divergence. In this sense, KL-control resolves the \reinforce degeneracy by implicitly selecting $\pp$.

But this resolution is more fragile than it appears. The convergence $\pb\to\pp$ holds in the forward KL direction, but the reverse KL, $\KL{\pb}{\pp}$, stays infinite for all $\beta>0$. One might therefore hope to substitute $\pp$ for $\pb$ as the optimization target, minimizing $\KL{\pit}{\pp}$ directly. But $\KL{\pit}{\pp}=\infty$ for any full-support policy $\pit$---which includes every standard autoregressive model---since $\pit$ has positive mass on invalid outputs, where $\pp$ has zero mass. The filtered model $\pp$ is thus structurally unreachable as a reverse-KL target, regardless of how small $\beta$ is made. More critically, we argue that under model misspecification---the practical regime where the parametric family $\Pi_\Theta$ does not contain $\pp$---the pressure to decrease $\beta$ tends to push the optimizer toward near-Dirac policies on a small number of valid outputs (mode collapse), rather than toward the filtered model.

The paper is organized as follows. Section~\ref{sec:setup} sets up the framework and identifies the binary-reward degeneracy. Section~\ref{sec:convergence} establishes the convergence of $\pb$ to $\pp$ and analyzes the ordering it induces among competing distributions. Sections~\ref{sec:binary-formulas} and~\ref{sec:beta-vs-mu} develop explicit formulas for the binary case and show that the opaque hyperparameter $\beta$ can be replaced by the more interpretable target validity rate $\mu$. Section~\ref{sec:misspecification} argues that misspecification disrupts the ideal picture. Section~\ref{sec:discussion} discusses the implications, including the question of whether alternative divergences could target $\pp$ directly.

Throughout, we advocate a shift in perspective: viewing KL-control not primarily as reward maximization with a penalty, but as distribution matching toward a target $\pb$. This geometric viewpoint clarifies both what KL-control achieves and where it fails.

The distributional matching perspective on RLVR, and the role of the
filtered model $\pp$ as the natural target, were developed in a line of
work including \citet{khalifa_distributional_2021},
\citet{korbak_reinforcement_2022}, \citet{kim_guaranteed_2025} and
\citet{kruszewski_whatever_2026}. The present note examines, in detail
and in a unified framework, the properties of KL-control specific to the
binary-reward setting: the precise sense in which $\pp$ emerges as a
limit (including a forward KL convergence result that appears to be
new), the structural reason why $\pp$ cannot serve as a direct
optimization target, and the mechanism by which misspecification drives
mode collapse. The technical tools---exponential families, I-projections,
Gibbs distributions---are standard; the contribution is the unified
picture they yield when brought to bear on binary rewards, providing a
structural account of phenomena that the empirical literature has been
observing piecemeal. This work was motivated in part by
\citet{kruszewski_whatever_2026}, of which the present author is a
co-author.
% ============================================================
\section{Setup and the Binary Reward Problem}
\label{sec:setup}
% ============================================================

\subsection{KL-controlled reinforcement learning}
\label{sec:kl-control}

We consider the following setup.
Let $\Y$ be a countable sample space (for concreteness, a space of finite token sequences) and $\PP$ the set of all probability distributions over~$\Y$.
A \emph{policy} $\pit$ is a parametrized distribution in $\PP$ from which one can both sample and evaluate probabilities; the parametric family is denoted~$\Pi_\Theta\subset\PP$.
We fix a \emph{base model} $\aa=\pi_{base}\in\Pi_\Theta$ and assume throughout that $\aa$ has full support over~$\Y$.
Finally, let $r:\Y\to\R$ be a real-valued reward.

The \emph{KL-control} objective, originating in linearly-solvable optimal control \citep{todorov_linearly_2007,kappen_linear_2005} 
adopted for language model fine-tuning by \citet{ziegler_fine-tuning_2019} and \citet{korbak_reinforcement_2022,korbak_rl_2022},
is, for a hyperparameter $\beta\ge 0$ and a distribution $q\in\PP$,
\begin{align}
    \Jb(q) \;:=\; \E_{y\sim q}\, r(y) \;-\; \beta\;\KL{q}{\aa}.
    \label{eq:J_beta_q}
\end{align}
The case $\beta=0$ recovers \reinforce: the objective $\Jb[0](q)=\E_q\,r$ depends only on the expected reward and makes no reference to the base model, 
opening the door to catastrophic forgetting.
For $\beta>0$ the KL penalty anchors the optimized policy to~$\aa$.

\subsection{The tilted distribution $\pb$}
\label{sec:tilted}

For $\beta>0$ and $r$ upper-bounded, define the \emph{tilted} (or Gibbs) distribution
\begin{align}
    \pb(y) \;:=\; \frac{1}{\Zb}\;\aa(y)\;e^{\frac{1}{\beta}\,r(y)},
    \qquad
    \Zb \;:=\; \sum_{y\in\Y}\aa(y)\;e^{\frac{1}{\beta}\,r(y)}.
    \label{eq:p_beta}
\end{align}
Because $\aa$ is full-support and $r$ is upper-bounded, $\Zb\in(0,\infty)$ and $\pb$ is a well-defined, full-support distribution.
Its role is captured by the following result (see Theorem 1 in \citealp{korbak_reinforcement_2022}).

\begin{prop}\label{thm:korbak}
Let $\beta>0$ and $r$ upper-bounded.
\begin{enumerate}[label=(\alph*), ref=\thethm(\alph*)]
    \item\label{thm:korbak:a}
        For any $q\in\PP$,\;
        $\Jb(q) = \beta\bigl[\log\Zb - \KL{q}{\pb}\bigr]$.
    \item\label{thm:korbak:b}
        $\pb$ is the unique maximizer of $\Jb(q)$ over $q\in\PP$.
    \item\label{thm:korbak:c}
        For any subset $\Pi\subset\PP$, maximizing $\Jb(q)$ over $\Pi$
        is equivalent to minimizing $\KL{q}{\pb}$ over $\Pi$.
\end{enumerate}
\end{prop}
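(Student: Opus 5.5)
The heart of the argument is the identity in (a); parts (b) and (c) follow almost immediately from it together with Gibbs' inequality.

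For (a), I start from the definition of $\pb$ in \eqref{eq:p_beta}. Taking logarithms gives, for every $y\in\Y$,
\[
\log\pb(y)=\log\aa(y)+r(y)/\beta-\log\Zb .
\]
Equivalently, $\log\frac{q(y)}{\aa(y)}=\log\frac{q(y)}{\pb(y)}+\frac{r(y)}{\beta}-\log\Zb$ on $\supp(q)$. Averaging under $q$ yields
\[
\KL{q}{\aa}=\KL{q}{\pb}+\tfrac{1}{\beta}\E_q r-\log\Zb .
\]
Multiplying by $-\beta$ and adding $\E_q r$ gives $\Jb(q)=\beta[\log\Zb-\KL{q}{\pb}]$.

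\textbf{Main obstacle: infinite quantities.} Because $\Y$ is countable, $\E_q r$ may equal $-\infty$ ($r$ is only bounded above), and the KL terms may be $+\infty$. So splitting the expectation of a sum into a sum of expectations needs justification.

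\begin{itemize}
\item \emph{Well-definedness of $\Jb(q)$.} Since $r\le M$, $\E_q r\in[-\infty,M]$ is well defined. Since $\KL{q}{\aa}\in[0,\infty]$, $\Jb(q)\in[-\infty,M]$ is always defined, and it equals $-\infty$ exactly when one of the two terms is infinite.
\item \emph{Handling the split.} I write the pointwise identity as
\[
\log\tfrac{q}{\pb}=\log\tfrac{q}{\aa}-\tfrac{r}{\beta}+\log\Zb ,
\]
and note that $-r/\beta+\log\Zb$ is bounded below. Convergence of both sides then has to be checked.
\end{itemize}

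The finite case is direct. If $\KL{q}{\aa}<\infty$ and $\E_q r>-\infty$, each summand is integrable and linearity of the sum over $\Y$ applies.

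Otherwise $\Jb(q)=-\infty$, and I must show $\KL{q}{\pb}=\infty$. I use the reverse identity
\[
\log\tfrac{q}{\aa}=\log\tfrac{q}{\pb}+\tfrac{r}{\beta}-\log\Zb .
\]
Suppose $\KL{q}{\pb}<\infty$. Then $\E_q\log\frac{q}{\aa}\le\KL{q}{\pb}+M/\beta-\log\Zb<\infty$. Here I use that the negative parts of log-ratios are always summable, since $\sum_y q(y)\bigl(\log\tfrac{\aa(y)}{q(y)}\bigr)_+\le\sum_y \aa(y)\le 1$ by $x\log(1/x)$-type bounds. A symmetric argument shows $\E_q r>-\infty$. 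So both terms are finite, contradicting the assumption, and the convention $\beta\cdot(\log\Zb-\infty)=-\infty$ makes (a) hold in every case.

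For (b), apply (a) together with Gibbs' inequality: $\KL{q}{\pb}\ge 0$, with equality iff $q=\pb$. Hence $\Jb(q)\le\beta\log\Zb$, with equality exactly at $q=\pb$. This value is attained, and finite, because $\Zb\in(0,\infty)$ as established in the paper.

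For (c), since $\beta>0$ and $\log\Zb$ does not depend on $q$, (a) shows that $q\mapsto\Jb(q)$ is a strictly decreasing affine function of $q\mapsto\KL{q}{\pb}$, extended to $-\infty\leftrightarrow+\infty$. Over any $\Pi\subset\PP$, the two problems therefore have the same maximizers/minimizers, the same suprema/infima related by $\sup_\Pi\Jb=\beta[\log\Zb-\inf_\Pi\KL{\cdot}{\pb}]$, and the same ordering of candidates.
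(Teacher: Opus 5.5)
Your proof is correct and follows essentially the paper's route: averaging the pointwise identity $\log\pb(y)=\log\aa(y)+r(y)/\beta-\log\Zb$ under $q$ gives (a), and (b) and (c) then follow from Gibbs' inequality and the strictly decreasing affine dependence of $\Jb(q)$ on $\KL{q}{\pb}$. Your one addition, the case analysis for $\E_q r=-\infty$ or $\KL{q}{\aa}=\infty$ using summability of the negative parts of the log-ratios, is sound, and it makes rigorous the splitting of $\E_q$ that the paper's one-line computation takes for granted on a countable $\Y$ with $r$ only bounded above.
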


\begin{proof}
For~\ref{thm:korbak:a}:
\begin{align*}
    \KL{q}{\pb}
    &= \E_q\log\frac{q(y)}{\pb(y)}
     = \log\Zb + \E_q\log\frac{q(y)}{\aa(y)\,e^{\frac{1}{\beta}r(y)}}\\
     &= \log\Zb - \tfrac{1}{\beta}\,\E_q\,r(y) + \KL{q}{\aa}\\
     &= \log\Zb - \tfrac{1}{\beta}\,\Jb(q).
\end{align*}
Parts \ref{thm:korbak:c} and \ref{thm:korbak:b} then follow:
for any $q,q'\in\Pi$,
$\KL{q}{\pb}<\KL{q'}{\pb}$ if and only if $\Jb(q)>\Jb(q')$;
and $\pb$ is the unique distribution satisfying $\KL{\pb}{\pb}=0$.
\end{proof}

Proposition~\ref{thm:korbak} converts the reward-maximization view of KL-control into a \emph{distribution-matching} view:
optimizing $\Jb$ over a model class $\Pi$ amounts to projecting $\Pi$ onto the target~$\pb$ in reverse KL.
This change of perspective will be central throughout the paper.

\begin{rem}[Role of KL-control for general rewards]\label{rem:reward-typology}
For a general reward $r$ whose maximum is attained at a single point~$y_*$,
the unique maximizer of $\E_q\,r$ over $\PP$ is the Dirac mass $\delta_{y_*}$, and the situation is unambiguous.
When $r$ does not attain its supremum, no maximizer of $\E_q\,r$ exists.
The critical intermediate case arises when the maximum is attained at \emph{multiple} points:
the set $\argmax_{q\in\PP}\E_q\,r$ is then an infinite family of distributions (all those supported on the maximizers), none of which is distinguished by the objective alone.
KL-control with $\beta>0$ resolves this degeneracy by selecting the unique maximizer~$\pb$ of~$\Jb$.
Binary rewards, our main focus, are the cleanest instance of this intermediate case.
\end{rem}

\subsection{Binary rewards and the REINFORCE degeneracy}
\label{sec:binary}

A \emph{binary reward} takes the form $r(y)=v(y)\in\{0,1\}$, where $v$ is a verifier.
Write
\[
\Y_1:=\{y\in\Y: v(y)=1\},
\qquad
\Y_0:=\Y\setminus\Y_1,
\]
and define $\Aone:=\aa(\Y_1)$, $\Azero:=\aa(\Y_0)=1-\Aone$.
We assume throughout that both $\Y_1$ and $\Y_0$ are nonempty
(so $0<\Aone<1$), and that $|\Y_1|\ge 2$
(the case of a single valid output being of limited interest).

The set of \emph{fully valid} distributions is
\begin{align}
    \PP_1 \;:=\; \{q\in\PP : \supp(q)\subseteq\Y_1\}
          \;=\; \{q\in\PP : \E_q\,v = 1\}.
    \label{eq:PP1}
\end{align}
This set coincides with $\argmax_{q\in\PP}\,\Jb[0](q)$:
every $q\in\PP_1$ achieves the maximum expected reward of~$1$.
In particular, $\PP_1$ contains every Dirac mass $\delta_{y^*}$ with $v(y^*)=1$, as well as every convex combination of such masses.

\begin{myhighlight}%
This is the \reinforce degeneracy for binary rewards:
the optimization landscape is completely flat over the large set~$\PP_1$.
A Dirac mass concentrated on a single valid output scores identically to a distribution that spreads its mass across all of~$\Y_1$ in proportion to the base model.
Pure \reinforce has no mechanism to prefer one over the other.
\end{myhighlight}

\subsection{The filtered model $\pp$}
\label{sec:pstar}

Among the elements of $\PP_1$, one distribution stands out on natural grounds.
The \emph{filtered model} is the base model conditioned on validity:
\begin{align}
    \pp(y) \;:=\; \aa(y \mid \Y_1) \;=\;
    \begin{cases}
        \aa(y)/\Aone & \text{if } v(y)=1,\\
        0            & \text{if } v(y)=0.
    \end{cases}
    \label{eq:pstar}
\end{align}
The filtered model preserves the relative probabilities assigned by $\aa$ to valid outputs:
for any $y,y'\in\Y_1$, $\pp(y')/\pp(y)=\aa(y')/\aa(y)$.
In this sense, it retains as much of the base model's structure as is compatible with perfect validity.

The following characterization makes this precise
(see Appendix~\ref{app:p*-is-I-proj-proof} for a proof):

\begin{prop}\label{thm:pstar-I-proj}
$\pp = \argmin_{q\in\PP_1}\KL{q}{\aa}$.
That is, $\pp$ is the unique distribution in $\PP_1$ that minimizes the divergence from $\aa$.
\end{prop}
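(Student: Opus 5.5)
The plan is to compute $\KL{q}{\aa}$ exactly for an arbitrary $q\in\PP_1$ and show that it splits into a constant plus a KL divergence to $\pp$. Fix $q\in\PP_1$. Since $\supp(q)\subseteq\Y_1$, only $y\in\Y_1$ contribute to the sum defining $\KL{q}{\aa}$. On $\Y_1$ we have $\aa(y)=\Aone\,\pp(y)$, directly from \eqref{eq:pstar}. Substituting gives
\begin{align*}
\KL{q}{\aa} \;=\; \sum_{y\in\Y_1} q(y)\log\frac{q(y)}{\Aone\,\pp(y)}
\;=\; \log\frac{1}{\Aone} + \KL{q}{\pp}.
\end{align*}
Here we used $\sum_{y\in\Y_1} q(y)=1$. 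The identity $\KL{q}{\pp}=\sum_{y\in\Y_1}q(y)\log\bigl(q(y)/\pp(y)\bigr)$ holds because $q$ vanishes off $\Y_1$. This mirrors the decomposition in Proposition~\ref{thm:korbak:a}, with $\pp$ in the role of $\pb$ and $-\log\Aone$ in the role of the log-partition term.

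From the identity, minimizing $\KL{q}{\aa}$ over $\PP_1$ is equivalent to minimizing $\KL{q}{\pp}$ over $\PP_1$. We have $\KL{q}{\pp}\ge 0$, with equality if and only if $q=\pp$ (Gibbs' inequality). Moreover $\pp\in\PP_1$, since $\Aone>0$ makes it well defined and it is supported on $\Y_1$. Hence $\pp$ is the unique minimizer, and the minimum value is $\log(1/\Aone)$, which is finite.

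The only delicate step is making the manipulation rigorous on a countable space, where KL sums may be infinite. Splitting the logarithm into $\log(1/\Aone)$ plus $\log(q/\pp)$ is harmless: the first part is a finite constant after summing against $q$. The second part is a sum whose terms have negative parts bounded by the standard $x\log x\ge x-1$ argument, so it is well defined in $[0,\infty]$. Therefore the identity holds in the extended reals, including when both sides are $+\infty$. Such $q$ are then clearly not minimizers, because $\pp$ attains a finite value. With this remark the argument is complete; I expect no real obstacle beyond this bookkeeping.
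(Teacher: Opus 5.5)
Your proposal is correct and matches the paper's proof essentially step for step. Both derive the decomposition $\KL{q}{\aa}=\KL{q}{\pp}+\log(1/\Aone)$ for $q\in\PP_1$ and then conclude with Gibbs' inequality. Your extra remark covering the case $\KL{q}{\pp}=+\infty$ on a countable space is a valid refinement that the paper leaves implicit.
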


In the language of information geometry, $\pp$ is the \emph{I-projection} of $\aa$ onto $\PP_1$ \citep{Csiszar1975IDivergenceGeometry}.

This gives $\pp$ a clear variational characterization:
it is the fully valid distribution closest to the base model.
But this characterization is not visible to pure \reinforce---the objective $\Jb[0]$ assigns $\pp$ the same value~$1$ as every other element of~$\PP_1$.
One of the main points of this paper is that KL-control \emph{does} single out $\pp$,
but only in a limiting sense, with subtleties that have practical consequences.

% ============================================================
\section{$\pp$ as a Limit of KL-Control}
\label{sec:convergence}
% ============================================================

\subsection{Convergence of $\pb$ to $\pp$}
\label{sec:convergence-thm}

The following theorem characterizes $\pp$ in terms of the limit behavior of $\pb$ for $\beta$ tending to $0$, distinguishing between different notions of distributional convergence (proof in Appendix~\ref{app:proof:convergence}).

\begin{thm}\label{thm:main}
Assume $\beta\to 0$. Then:
\begin{enumerate}[label=(\alph*), ref=\thethm(\alph*)]
\item\label{thm:main:a} $\forall y\in\Y,\; \pb(y)\to \pp(y)$.\hfill (pointwise)
\item\label{thm:main:b} $\TVD \pp \pb \to 0$.\hfill (total variation)
\item\label{thm:main:c} $\KL \pp \pb \to 0$.\hfill (forward KL)
\item\label{thm:main:d} $\KL \pb \pp = \infty$ for all $\beta>0$.\hfill (reverse KL)
\end{enumerate}
\end{thm}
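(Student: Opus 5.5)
The plan is to make everything explicit in the binary case, since $\pb$ has a closed form. With $r=v\in\{0,1\}$, we have $\Zb=\Azero+\Aone e^{1/\beta}$. Hence $\pb(y)=\aa(y)e^{1/\beta}/\Zb$ for $y\in\Y_1$ and $\pb(y)=\aa(y)/\Zb$ for $y\in\Y_0$. It is convenient to write these as
\[
\pb(y)=\frac{\aa(y)}{\Aone+\Azero e^{-1/\beta}}\ \ (y\in\Y_1),
\qquad
\pb(y)=\frac{\aa(y)}{\Azero+\Aone e^{1/\beta}}\ \ (y\in\Y_0).
\]

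\textbf{Step (a).} As $\beta\to0$ we have $e^{-1/\beta}\to0$ and $e^{1/\beta}\to\infty$. So on $\Y_1$ the value $\pb(y)\to\aa(y)/\Aone=\pp(y)$, and on $\Y_0$ it tends to $0=\pp(y)$. This gives pointwise convergence immediately.

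\textbf{Step (b).} Rather than invoking Scheff\'e's lemma, I would compute the total variation exactly. On $\Y_1$ the ratio $\pb/\pp=\Aone/(\Aone+\Azero e^{-1/\beta})<1$, so $\pb\le\pp$ there, and $\pb\ge\pp=0$ on $\Y_0$. Therefore $\TVD{\pp}{\pb}$ equals the mass $\pb(\Y_0)=\Azero/(\Azero+\Aone e^{1/\beta})$, which tends to $0$.

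\textbf{Step (c).} This step looks like the main obstacle, since forward KL does not in general follow from TV convergence. Here, however, the likelihood ratio is constant on $\supp(\pp)=\Y_1$, so
\[
\KL{\pp}{\pb}=\sum_{y\in\Y_1}\pp(y)\log\frac{\pp(y)}{\pb(y)}=\log\frac{\Aone+\Azero e^{-1/\beta}}{\Aone}=\log\bigl(1+\tfrac{\Azero}{\Aone}e^{-1/\beta}\bigr).
\]
This tends to $0$, in fact exponentially fast in $1/\beta$. The only care needed is the convention $0\log 0=0$ on $\Y_0$, which makes those terms vanish; no summability issue arises because the ratio is constant.

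\textbf{Step (d).} Fix $\beta>0$. Since $\aa$ has full support and $\Y_0\neq\emptyset$, any $y\in\Y_0$ has $\pb(y)>0$ and $\pp(y)=0$. The term $\pb(y)\log(\pb(y)/\pp(y))$ is therefore $+\infty$. All other terms are finite: on $\Y_1$ the ratio $\pb(y)/\pp(y)$ is constant and positive, so those terms sum to a finite value. Hence $\KL{\pb}{\pp}=\infty$; equivalently, $\pb$ is not absolutely continuous with respect to $\pp$.
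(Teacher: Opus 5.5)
Your proposal is correct and follows essentially the same route as the paper's proof in Appendix~B. Both use the closed form of $\pb$ in the binary case, the exact identity $\TVD{\pp}{\pb}=\pb(\Y_0)$, the constant likelihood ratio on $\Y_1$ giving $\KL{\pp}{\pb}=\log\bigl(1+\tfrac{\Azero}{\Aone}e^{-1/\beta}\bigr)$, and the support mismatch for the reverse KL. The only differences are cosmetic: the paper writes everything in terms of $\lambda=1/\beta$, and in (d) you explicitly check that the $\Y_1$ terms sum to a finite value, which the paper leaves implicit.
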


Parts \ref{thm:main:a}--\ref{thm:main:b} were established by \citet{kruszewski_whatever_2026}; part~\ref{thm:main:d} (reverse KL infinity) is implicit in their Appendix~H, which analyzes the $\alpha\to 1$ limit of the $\alpha$-divergence $D_{f_\alpha}(\pi\|\pp)$ and shows that the leakage penalty diverges when $\pi$ has mass outside $\supp(\pp)$. We include all four parts here because the exponential-family framing yields a self-contained and unified treatment, and because part~\ref{thm:main:c} (forward KL convergence) appears to be new. Together, parts \ref{thm:main:a}--\ref{thm:main:c} represent progressively stronger forms of convergence: pointwise, total variation, and forward KL. The distribution $\pb$ approaches $\pp$ in all of these senses.

Part~\ref{thm:main:d} is the striking asymmetry. The reverse KL, $\KL{\pb}{\pp}$, is not merely large---it is \emph{infinite} for every finite $\beta$. This is because $\pb$ has full support over $\Y$ (since $\aa$ does), while $\pp$ is supported only on $\Y_1$: for any $y\in\Y_0$, $\pb(y)>0$ but $\pp(y)=0$, making the KL infinite.

This asymmetry has a concrete consequence. One might hope to replace $\pb$ by $\pp$ in the KL-control objective and minimize $\KL{q}{\pp}$ directly. But for any $q$ that puts any mass on $\Y_0$---which includes every full-support policy $\pit$---we have $\KL{q}{\pp}=\infty$. The reverse KL, $\KL{\cdot}{\pp}$, is simply not a usable objective for comparing distributions that are not already fully valid.

In contrast, $\pp$ can serve as the \emph{first} argument: $\KL{\pp}{q}$ does not suffer from the same structural obstruction, since $\pp$ is supported on $\Y_1$ and the sum $\sum_{y\in\Y_1}\pp(y)\log\frac{\pp(y)}{q(y)}$ is well-defined whenever $q$ has full support. In particular, autoregressive policies typically have full support, making $\KL{\pp}{\pit}$ a viable objective for targeting $\pp$ directly. We return to this observation in Section~\ref{sec:discussion}.

\subsection{Comparing candidates}
\label{sec:ordering}

Suppose that $\pi,\pi'$ are two arbitrary distributions in $\PP$, and 
that we want to compare their performances relative to the objective 
$\Jb$. We write $\muq{\pi}:=\E_\pi\,v$ for the \emph{validity} of 
$\pi$.\footnote{In the case of binary rewards, $\muq{\pi}=\E_\pi\,v$ 
measures the probability that a sample from $\pi$ is valid. We also 
call it the \emph{mean validity} of~$\pi$. When $\muq{\pi}=1$, we say 
$\pi$ is \emph{fully valid}.}
The following identities are immediate consequences of 
Proposition~\ref{thm:korbak:a}:
\begin{align}
    \Jb(\pi')-\Jb(\pi) 
    &= \beta \left[\KL{\pi}{\pb}-\KL{\pi'}{\pb}\right],
    \label{eq:Jb-compare-1}\\
    &= \left[\muq{\pi'} - \muq{\pi}\right] - \beta 
    \left[\KL{\pi'}{\aa} - \KL{\pi}{\aa}\right].
    \label{eq:Jb-compare-2}
\end{align}

From~\eqref{eq:Jb-compare-2}, the behavior as $\beta\to 0$ is 
transparent: the validity difference dominates, and among candidates 
with equal validity, the KL divergence to $\aa$ breaks ties. In 
particular:

\begin{figure}[t]
  \centering
  \includegraphics[width=0.75\linewidth]{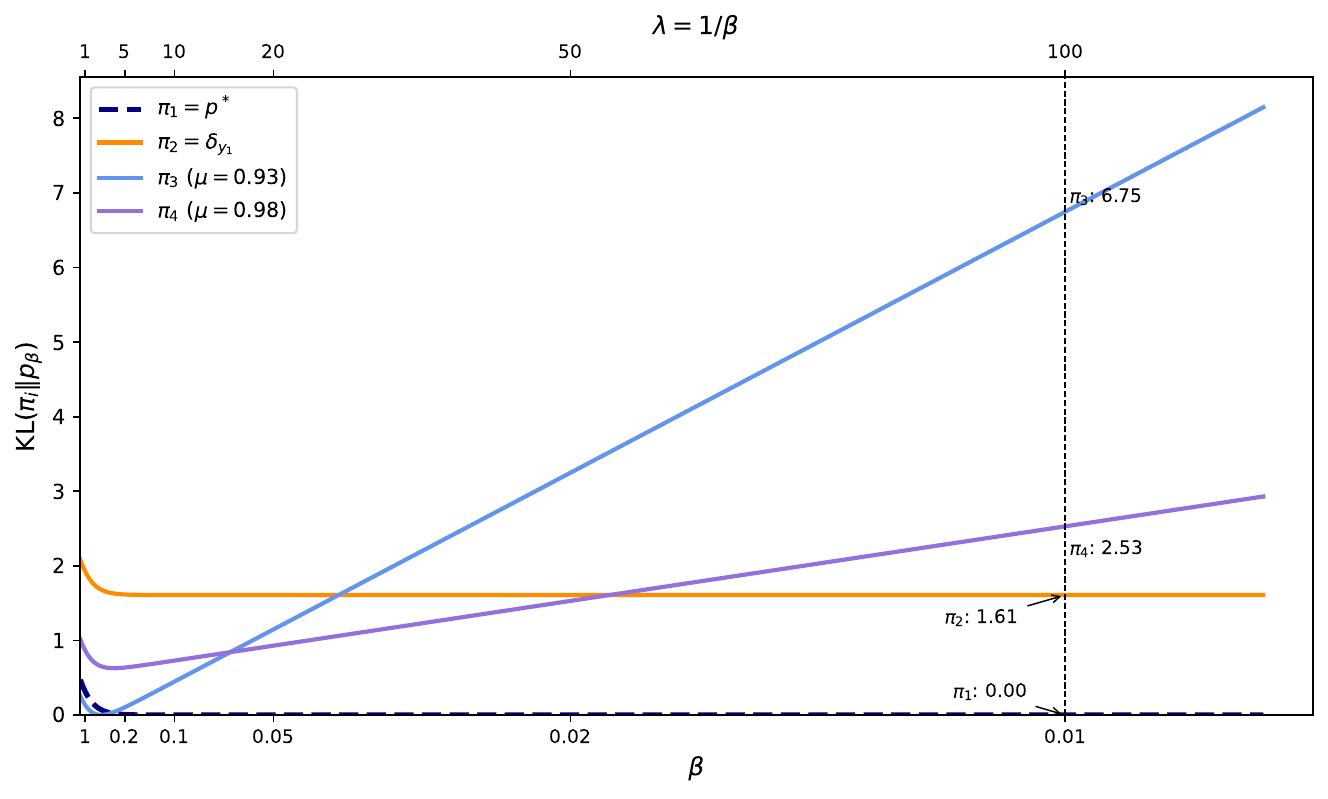}
\caption{Toy illustration ($|\Y|=5$, $|\Y_1|=3$) of the small-$\beta$ 
  ordering. We plot $\KL{\pi_i}{\pb}$ as a function of $\lambda=1/\beta$ 
  for four fixed candidates: $\pi_1=\pp$, $\pi_2=\delta_{y_1}$, 
  $\pi_3$ (validity $\muq{\pi_3}=0.93$, $\TVD{\pi_3}{\pp}=0.07$), and 
  $\pi_4$ (validity $\muq{\pi_4}=0.98$, $\TVD{\pi_4}{\pp}=0.54$).
  Although $\pi_3$ is much closer to $\pp$ in total variation than 
  $\pi_4$, it has lower validity. Consequently, for sufficiently large 
  $\lambda$, $\KL{\pi_4}{\pb}<\KL{\pi_3}{\pb}$: the optimizer prefers 
  $\pi_4$ over $\pi_3$, despite $\pi_4$ being far from $\pp$.
  This mechanism drives the mode-collapse argument of 
  Section~\ref{sec:misspecification}.}
  \label{fig:toy-kl-pi-vs-plambda}
\end{figure}

\begin{cor}[of Proposition~\ref{thm:pstar-I-proj}]\label{cor:pstar-wins}
For $\beta$ sufficiently small, $\pp$ is preferred over any $\pi\neq\pp$:
over any $\pi$ with $\muq{\pi}<1$ because $\muq{\pp}=1>\muq{\pi}$, 
and over any fully valid $\pi\neq\pp$ because $\KL{\pp}{\aa} < 
\KL{\pi}{\aa}$ by Proposition~\ref{thm:pstar-I-proj}.
\end{cor}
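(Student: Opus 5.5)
The plan is to read ``$\pp$ is preferred over $\pi$'' as $\Jb(\pp)>\Jb(\pi)$, or equivalently by \eqref{eq:Jb-compare-1} as $\KL{\pp}{\pb}<\KL{\pi}{\pb}$. The natural tool is the decomposition \eqref{eq:Jb-compare-2} with $\pi'=\pp$, which gives
\begin{align*}
\Jb(\pp)-\Jb(\pi) \;=\; \bigl[1-\muq{\pi}\bigr] \;-\; \beta\bigl[\KL{\pp}{\aa}-\KL{\pi}{\aa}\bigr].
\end{align*}
First I will note that $\KL{\pp}{\aa}=\log(1/\Aone)$ is finite, since $\pp=\aa/\Aone$ on $\Y_1$. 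Then I split into the two cases named in the statement.

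\emph{Case $\muq{\pi}<1$.} The first bracket is a strictly positive constant $1-\muq{\pi}$. The second term is bounded below by $-\beta\log(1/\Aone)$, because $\KL{\pi}{\aa}\ge 0$. If $\KL{\pi}{\aa}=\infty$, then $\Jb(\pi)=-\infty$ and the claim is trivial. So for every $\beta<(1-\muq{\pi})/\log(1/\Aone)$ the difference is positive. The threshold depends on $\pi$, so ``sufficiently small'' must be read as ``for each fixed $\pi$''.

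\emph{Case $\muq{\pi}=1$, $\pi\ne\pp$.} The validity term vanishes, and the difference equals $\beta[\KL{\pi}{\aa}-\KL{\pp}{\aa}]$. This is strictly positive for every $\beta>0$ by the uniqueness part of Proposition~\ref{thm:pstar-I-proj}, including when $\KL{\pi}{\aa}=\infty$.

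The only genuine subtlety, and the step to state carefully, is the order of quantifiers. The threshold on $\beta$ cannot be uniform over all $\pi$ with $\muq{\pi}<1$: as $\muq{\pi}\to1$ the allowed $\beta$ shrinks to $0$. This is consistent with Theorem~\ref{thm:main}, since $\pb$ itself has validity $<1$ and beats $\pp$ for every $\beta>0$ by Proposition~\ref{thm:korbak:b}. I would therefore state the corollary as: for each fixed $\pi\ne\pp$ there is $\beta_0(\pi)>0$ such that $\pp$ is preferred for all $\beta<\beta_0(\pi)$. A remark can then note that this pointwise, non-uniform ordering is exactly what Figure~\ref{fig:toy-kl-pi-vs-plambda} illustrates.
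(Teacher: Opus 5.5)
Your proposal is correct and follows the paper's own reasoning: you substitute $\pi'=\pp$ into \eqref{eq:Jb-compare-2} and split on $\muq{\pi}<1$ (the validity term dominates) versus $\muq{\pi}=1$, $\pi\neq\pp$ (the tie is broken by Proposition~\ref{thm:pstar-I-proj}). Your explicit threshold $\beta<(1-\muq{\pi})/\log(1/\Aone)$, your handling of $\KL{\pi}{\aa}=\infty$, and your observation that the threshold cannot be uniform in $\pi$ (witnessed by $\pb$ itself) are valid sharpenings that agree with the pointwise reading in Remark~\ref{rem:pstar-limit-characterization}.
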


\begin{rem}\label{rem:pstar-limit-characterization}
The Corollary does \emph{not} say that $\pp$ maximizes $\Jb[0]$: 
indeed, $\Jb[0]$ assigns every fully valid distribution $\pi$ the same 
value of $1$, providing no basis for preferring $\pp$ over any other 
element of $\PP_1$. What the Corollary says is that the 
\emph{comparison} $\Jb(\pp)-\Jb(\pi)$, while tending to zero in 
absolute value as $\beta\to 0$, maintains a well-defined sign 
throughout: $\pp$ is ranked above any competitor for all sufficiently 
small $\beta>0$. Taking the limit before comparing destroys this 
information; comparing first and then taking the limit does not.
\end{rem}

\begin{rem}[KL view versus $\Jb$ view]\label{rem:kl-vs-jb}
Identity~\eqref{eq:Jb-compare-1} expresses the same comparison in 
terms of KL divergence to $\pb$: $\pi'$ is preferred over $\pi$ if 
and only if $\pi'$ is closer to $\pb$ in reverse KL. For two fully 
valid distributions $\pi,\pi'$ with $\muq{\pi}=\muq{\pi'}=1$, 
identity~\eqref{eq:Jb-compare-2} gives $\Jb(\pi')-\Jb(\pi) = 
-\beta[\KL{\pi'}{\aa}-\KL{\pi}{\aa}]$, which vanishes as $\beta\to 0$: 
the $\Jb$ objective becomes blind to the distinction between them. 
Yet from~\eqref{eq:Jb-compare-1}, the KL gap 
$\KL{\pi}{\pb}-\KL{\pi'}{\pb}$ remains nonzero, stabilizing at the 
constant $\KL{\pi}{\aa}-\KL{\pi'}{\aa}$ as $\beta\to 0$. The KL 
formulation maintains a nonvanishing gap that $\Jb$ alone cannot see.
\end{rem}

\citet[Appendix~H]{kruszewski_whatever_2026} develop a parallel 
analysis in the $\alpha$-divergence setting, decomposing 
$D_{f_\alpha}(\pi\|\pp)$ into a ``leakage penalty'' driven by 
$\pi(\Y_1)$ and a ``shape divergence'' measuring conditional fit. 
Their decomposition shows that for $\alpha$ close to~$1$, the leakage 
penalty dominates, so policies with higher validity mass are 
preferred---the same conclusion reached here via 
identity~\eqref{eq:Jb-compare-2}. The two viewpoints are 
complementary: theirs varies the divergence ($\alpha$) with the target 
fixed at $\pp$; ours varies $\beta$ with the divergence fixed at 
reverse KL.

\paragraph{Toy illustration.}
Figure~\ref{fig:toy-kl-pi-vs-plambda} illustrates the ordering 
mechanism on a toy instance with $|\Y|=5$ and $|\Y_1|=3$.
Let $\Y=\{y_1,\ldots,y_5\}$, $r(y_1)=r(y_2)=r(y_3)=1$, 
$r(y_4)=r(y_5)=0$, and
\[
\aa=(0.10,\;0.22,\;0.18,\;0.25,\;0.25),
\qquad
\pp=\aa(\cdot\mid\Y_1)=(0.20,\;0.44,\;0.36,\;0,\;0).
\]
We compare four candidates:
$\pi_1:=\pp$,\;
$\pi_2:=\delta_{y_1}=(1,0,0,0,0)$,\;
$\pi_3:=(1-\varepsilon)\pp+\varepsilon\,\nu_0$ with $\varepsilon=0.07$ 
and $\nu_0=(0,0,0,0.4,0.6)$,\;
$\pi_4:=(0.05,0.05,0.88,0.01,0.01)$.

We have $\muq{\pi_3}=0.93$, $\muq{\pi_4}=0.98$, 
$\TVD{\pi_3}{\pp}=0.07$, $\TVD{\pi_4}{\pp}=0.54$.
Thus $\pi_3$ is much closer to $\pp$ in total variation than $\pi_4$, 
but has lower validity. As identity~\eqref{eq:Jb-compare-2} predicts, 
for sufficiently small $\beta$ the optimizer prefers $\pi_4$ over 
$\pi_3$ despite $\pi_4$ being far from $\pp$: validity dominates.
Figure~\ref{fig:toy-kl-pi-vs-plambda} confirms this, showing 
$\KL{\pi_4}{\pb}<\KL{\pi_3}{\pb}$ for sufficiently large $\lambda=1/\beta$.

\subsection{Explicit formulas for binary rewards}
\label{sec:binary-formulas}

The tilted distribution $\pb$ of~\eqref{eq:p_beta} belongs to the
\emph{exponential family} generated by $\aa$ and the reward $r$
(a connection we develop further in Section~\ref{sec:general-info-geom}):
\begin{align}
    \pl(y) \;:=\; \frac{\aa(y)\,e^{\lambda\,r(y)}}{\Zl},
    \qquad
    \Zl \;:=\; \sum_{y\in\Y}\aa(y)\,e^{\lambda\,r(y)},
    \qquad
    A(\lambda) \;:=\; \log \Zl,
    \label{eq:1-dim-exp-fam}
\end{align}
where $\lambda\in\R$ is the \emph{natural parameter}. Under the
identification $\lambda=1/\beta$, we have $\pl=\pb$.

For $r(y)=v(y)\in\{0,1\}$, this family takes a particularly simple form: because the reward takes only two values, all the relevant quantities ($\Zl$, $\mu(\lambda)$, $\lambda(\mu)$, $\kappa(\mu)$) admit closed elementary expressions in terms of $\Aone$ and $\Azero$. This is special to the binary case; for general bounded rewards the same quantities exist and retain the same geometric meaning, but no longer in closed form (see Appendix~\ref{app:general-info-geom}).
Recall that $\Aone:=\aa(\Y_1)\in(0,1)$ and $\Azero:=1-\Aone=\aa(\Y_0)$.
Then
\[
\Zl=\Azero+\Aone\,e^\lambda,
\qquad
A(\lambda)=\log\bigl(\Azero+\Aone\,e^\lambda\bigr).
\]
The three key functions $\mu$, $\lambda$, $\kappa$ are defined as follows.

\begin{description}

\item[$\mu(\lambda)$:] The \emph{moment map}, or \emph{target validity},
is the expected reward under $\pl$:
\[
\mu(\lambda) \;=\; \E_{\pl}[r] \;=\; \frac{\Aone\,e^\lambda}{\Azero+\Aone\,e^\lambda}.
\]
This is a strictly increasing bijection from $\R$ to $(0,1)$: as $\lambda$
increases, the numerator grows faster than the denominator, so
$\mu(\lambda)$ increases; and $\mu(\lambda)\to 0$ as $\lambda\to-\infty$,
$\mu(\lambda)\to 1$ as $\lambda\to+\infty$. At $\lambda=0$ (the base
model), $\mu(0)=\Aone$.

\item[$\lambda(\mu)$:] Since $\mu(\lambda)$ is a strictly increasing
bijection, it has an inverse $\lambda(\mu)$, obtained by solving
$\mu=\Aone e^\lambda/(\Azero+\Aone e^\lambda)$ for $\lambda$:
\[
\lambda(\mu) \;=\; \log\frac{\mu}{1-\mu}+\log\frac{\Azero}{\Aone}.
\]
This is a strictly increasing bijection from $(0,1)$ to $\R$,
with $\lambda(\Aone)=0$.

\item[$\kappa(\mu)$:] The \emph{divergence cost} $\kappa(\mu):=
\KL{\pl[\lambda(\mu)]}{\aa}$ is the KL price paid by the exponential
family to achieve validity $\mu$:
\begin{align}
    \kappa(\mu) \;=\; \mu\log\frac{\mu}{\Aone}+(1-\mu)\log\frac{1-\mu}{\Azero}.
    \label{eq:kappa-binary}
\end{align}
% This is the KL divergence from $\mathrm{Bernoulli}(\mu)$ to
% $\mathrm{Bernoulli}(\Aone)$. It is nonnegative, strictly convex on
% $(0,1)$, has its unique minimum $0$ at $\mu=\Aone$ (the base model's
% validity rate), and is strictly increasing on $[\Aone,1)$ with
% $\kappa(1)=\KL{\pp}{\aa}=-\log\Aone$.
This is the KL divergence from $\mathrm{Bernoulli}(\mu)$ to
$\mathrm{Bernoulli}(\Aone)$. It is nonnegative, strictly convex on
$(0,1)$, has its unique minimum $0$ at $\mu=\Aone$ (the base model's
validity rate), and is strictly increasing on $[\Aone,1)$. Using the
standard convention $0\log 0:=0$, it extends continuously to $\mu=1$,
where $\kappa(1)=\KL{\pp}{\aa}=-\log\Aone$.

\end{description}

The three quantities $\lambda$, $\mu$, $\kappa$ are in bijection over
the regime of interest: fixing any one determines the other two.
In particular, the hyperparameter $\beta=1/\lambda$ determines the
target validity~$\mu$ and the KL cost~$\kappa$, and vice versa.
This reflects a fundamental trade-off: $\kappa(\mu)$ is strictly
increasing on $[\Aone,1)$, so achieving higher validity always requires
paying a larger KL cost relative to the base model. As $\mu\to 1$
(equivalently $\lambda\to+\infty$), the cost approaches its supremum
$\kappa(1)=\KL{\pp}{\aa}=-\log\Aone$, which is the KL divergence of
the filtered model from the base model---a finite but
unattained ceiling within the open exponential family
$\{\pl\}_{\lambda\in\R}$, attained only in the limit $\lambda\to+\infty$.

For each $\mu\in(0,1)$, define the \emph{moment slice}
$\mathcal{M}_\mu:=\{q\in\PP:\E_q[r]=\mu\}$ as the set of all
distributions with expected reward~$\mu$. The exponential family
$\EE:=\{\pl\}_{\lambda\in\R}$ intersects each slice $\mathcal{M}_\mu$
at exactly the point $\pl[\lambda(\mu)]$, since $\mu(\lambda)$ is a
bijection. As established in Theorem~\ref{thm:main}, the family
interpolates from $p_{-*}:=\aa(\cdot\mid\Y_0)$ to
$\pp:=\aa(\cdot\mid\Y_1)$ as $\lambda$ runs from $-\infty$ to
$+\infty$, with $\mu(\lambda)$ running from~$0$ to~$1$.
Figure~\ref{fig:binary-info-geom} illustrates this geometry.

\begin{figure}[t]
    \centering
    \includegraphics[width=0.85\linewidth]{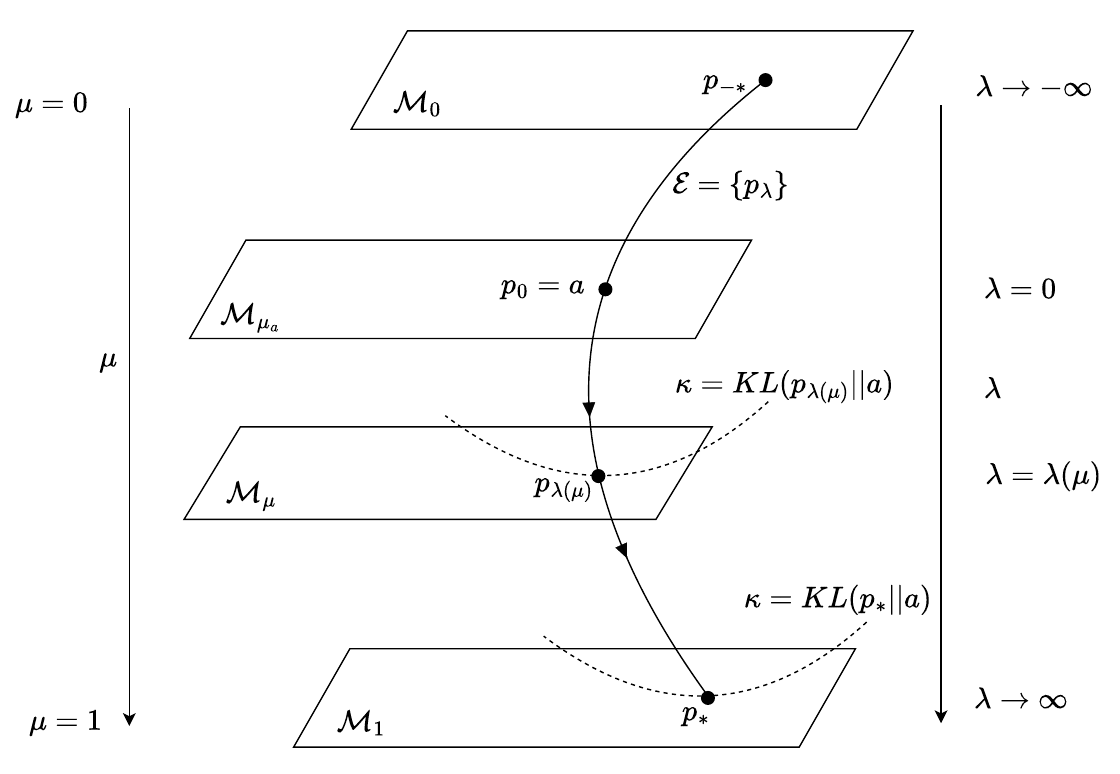}
\caption{Information geometry of KL-control for binary rewards.
    The \emph{moment slices} $\mathcal{M}_\mu$ are indexed by the
    validity level $\mu\in[0,1]$ and stacked vertically with $\mu$
    increasing downward: $\mathcal{M}_0$ (top, containing $p_{-*}$),
    a representative interior slice $\mathcal{M}_{\mu_a}$ (containing
    $\aa=p_0$, the base model at $\lambda=0$), a generic slice
    $\mathcal{M}_\mu$, and $\mathcal{M}_1$ (bottom, equal to $\PP_1$
    and containing $\pp$).
    The exponential family $\mathcal{E}=\{p_\lambda\}$ (curve with
    arrows) threads through all slices, with each $\pl[\lambda(\mu)]$
    the I-projection of $\aa$ onto $\mathcal{M}_\mu$ (see
    Subsection~\ref{sec:general-info-geom} and
    Appendix~\ref{app:general-info-geom} for this notion and
    Fig.~\ref{fig:iprojs-general} for the general bounded-reward
    skeleton without attained bounds); the arrows indicate this
    projection relationship.
    The dashed curves show two KL sublevel sets: one tangent to
    $\mathcal{M}_\mu$ at $\pl[\lambda(\mu)]$ with cost
    $\kappa=\KL{\pl[\lambda(\mu)]}{\aa}$, and one tangent to
    $\mathcal{M}_1$ at $\pp$ with cost $\kappa=\KL{\pp}{\aa}$.}
    \label{fig:binary-info-geom}
\end{figure}

\subsection{The interpretability of $\beta$ versus $\mu$}
\label{sec:beta-vs-mu}

The case of binary rewards highlights a practical issue with KL-control as commonly used. The hyperparameter $\beta$ (or equivalently $\lambda=1/\beta$) has no transparent meaning: it controls the balance between reward and KL penalty, but its numerical value does not directly convey the properties of the resulting distribution.

In contrast, the validity target $\mu$ has a clear invariant meaning: it specifies the fraction of samples that should be valid.
Setting $\mu=0.9$, for instance, means ``$90\%$ of the model's outputs should satisfy the verifier.'' This is a specification that is meaningful independently of the base model.

The relation between $\beta$ and $\mu$ depends strongly on $\Aone=\aa(\Y_1)$, the base model's current validity rate.
From the table above, $\beta = 1/\lambda(\mu) = 1/\!\left(\log\frac{\mu}{1-\mu}+\log\frac{\Azero}{\Aone}\right)$.
To target $\mu=0.9$, one needs $\lambda(0.9)=\log 9 + \log\frac{\Azero}{\Aone}$, which equals approximately $4.4$ when $\Aone=0.1$ (weak base model), $2.2$ when $\Aone=0.5$ (moderate), and $0$ when $\Aone=0.9$ (strong). The corresponding $\beta$ values are $0.23$, $0.45$, and $\infty$, respectively: the same target validity corresponds to wildly different $\beta$'s depending on the base model.

\begin{myhighlight}%
Fixing $\beta$ across experiments with different base models conflates two effects---the target validity level and the base model's strength---and is therefore difficult to interpret. Fixing $\mu$ separates them: it specifies what we want, independently of where we start.
\end{myhighlight}

\noindent A practical caveat: exploiting $\mu$ directly in the optimization requires knowing $\Aone$, which may need to be estimated. In contrast, $\beta$ can be plugged directly into the objective~\eqref{eq:J_beta_q}. Nevertheless, the conceptual point stands---$\mu$ is the more interpretable quantity---and adaptive schemes that estimate $\Aone$ on the fly are straightforward.\footnote{For instance, one can estimate $\Aone$ by evaluating the verifier on a batch of base-model samples, and then set $\beta=1/\lambda(\mu)$ for the desired target $\mu$.}

\subsection{Connection to Exponential Families and Information Geometry}
\label{rem:general-info-geom}
\label{sec:general-info-geom}

The binary formulas of Section~\ref{sec:binary-formulas} are a
specialization of the theory of \emph{exponential families} \citep{brown_fundamentals_1986, wainwright_graphical_2008}, and it is natural to work directly
in that language.
Three considerations motivate
the switch from $\beta$ to the natural parameter $\lambda=1/\beta$.
First, $\lambda$ ranges over all of $\R$: negative values correspond to
distributions tilted \emph{away} from validity, and $\lambda=0$ recovers
the base model $\aa$ itself---a structural fact that is obscured when
$\beta$ is confined to $(0,\infty)$. Second, the exponential family
framing connects to a rich body of theory (I-projections, Legendre
duality, the geometry of moment slices) that we can exploit directly.
Third, and looking beyond the one-dimensional setting of this paper,
$\lambda$ generalizes naturally to a vector of natural parameters
$\blambda\in\R^k$ for multi-reward objectives, whereas $\beta$ has no
natural vector analogue.

A useful algebraic identity, whose consequences for exponential family
theory are developed systematically in \citet{dymetman_exponential_2026},
is the following, where $A(\lambda):=\log\Zl$ is the \emph{log-partition
function} (see Appendix~\ref{app:kl-identity-proof} for a proof):
\begin{equation}\label{eq:KL-identity}
    \KL{q}{\pl[{\lambda_2}]} - \KL{q}{\pl[{\lambda_1}]}
    = A(\lambda_2) - A(\lambda_1) + \muq{q} \cdot (\lambda_1 - \lambda_2).
\end{equation}
For instance, setting $\lambda_1=0$ (so $\pl[\lambda_1]=\aa$) and
rearranging recovers the identity of
Proposition~\ref{thm:korbak:a}: $\Jb(q) = \beta[\log\Zb -
\KL{q}{\pb}]$.

The geometric picture that emerges is as follows. For arbitrary bounded
rewards, the exponential family curve $\EE=\{\pl\}$ threads through a
family of \emph{moment slices} $\mathcal{M}_\mu:=\{q\in\PP:\E_q[r]=\mu\}$,
intersecting each slice $\mathcal{M}_\mu$ at the unique point
$\pl[\lambda(\mu)]$. This intersection point is the \emph{I-projection}
of $\aa$ onto $\mathcal{M}_\mu$---the distribution in $\mathcal{M}_\mu$
closest to $\aa$ in KL divergence. The divergence cost
$\kappa(\mu)=\KL{\pl[\lambda(\mu)]}{\aa}$ is the Legendre--Fenchel dual
of the log-partition function $A$, and the KL sublevel set of cost
$\kappa(\mu)$ is tangent to $\mathcal{M}_\mu$ at $\pl[\lambda(\mu)]$:
any distribution with expected reward exceeding $\mu$ must pay a KL
cost exceeding $\kappa(\mu)$. This general treatment, which applies to
arbitrary bounded rewards, is developed in
Appendix~\ref{app:general-info-geom}.

% ============================================================
\section{Misspecification and Mode Collapse}
\label{sec:misspecification}
% ============================================================

The previous sections have described an ideal picture: in the space $\PP$ of all distributions, KL-control selects $\pp$ in the $\beta\to 0$ limit, and the trade-off between validity and divergence is cleanly characterized. In practice, however, the optimization is over a parametric family $\Pi_\Theta$ of autoregressive policies, and this changes the picture fundamentally.

\subsection{The misspecification problem}
\label{sec:misspec-problem}

The parametric family $\Pi_\Theta$ typically includes the base model $\aa$ but cannot be assumed to contain $\pp$, or even any other member of the exponential family $\EE=\{\pl\}$ beyond $\aa=\pl[0]$.

\begin{quote}
There exist standard autoregressive models $\aa$ and low-complexity binary verifiers $v$ such that no autoregressive model $\pit$ can match the filtered model $\pp(y)\propto \aa(y)\,v(y)$; see \citet{lin_limitations_2021} and Appendix~A of \citet{kim_guaranteed_2025}. Even simple verifiers, such as checking the presence of a specific word, are suspected to produce $\pp$ outside $\Pi_\Theta$ \citep{khalifa_distributional_2021, zhang_tractable_2023}.
\end{quote}

A parametric family that does not contain the target distribution is
called \emph{misspecified}. Under misspecification, the optimization
$\inf_{\pit\in\Pi_\Theta}\KL{\pit}{\pb}$ may not reach its
unconstrained optimum $\pb$, and the best achievable policy within
$\Pi_\Theta$ (assuming the infimum is attained) may bear little
resemblance to either $\pb$ or $\pp$.

\subsection{Two compounding gaps}
\label{sec:two-gaps}

The practical optimization faces two distinct difficulties:

\paragraph{The misspecification gap.}
Even if the optimizer could perfectly solve the projection problem, the best achievable policy $\pi_\theta^*:=\argmin_{\pit\in\Pi_\Theta}\KL{\pit}{\pb}$ may be far from $\pp$ (or from $\pb$). The structure of $\Pi_\Theta$ constrains what distributions are reachable, and the KL-optimal projection onto $\Pi_\Theta$ may land on a qualitatively different distribution than the target.

\paragraph{The optimization gap.}
Standard stochastic-gradient methods for minimizing $\KL{\pit}{\pb}$ (equivalently, maximizing $\Jb(\pit)$) follow complex dynamics that do not guarantee convergence to the global optimum $\pi_\theta^*$ within $\Pi_\Theta$. Local optima, saddle points, and the high-dimensional landscape of autoregressive models all contribute to this gap.

Both gaps are present simultaneously in practice. The analysis below focuses on the misspecification gap, which is the more fundamental of the two and which interacts specifically with the structure of binary rewards.

\subsection{Why small $\beta$ drives mode collapse}
\label{sec:mode-collapse}
The key insight comes from identity~\eqref{eq:Jb-compare-2}, which we
rewrite in terms of $\lambda=1/\beta$:
\begin{align}
    \KL{\pi}{\pl} - \KL{\pi'}{\pl}
    = \bigl[\KL{\pi}{\aa}-\KL{\pi'}{\aa}\bigr] - \lambda\bigl[\muq{\pi}-\muq{\pi'}\bigr]
    \label{eq:KL-difference-candidates}
\end{align}
(equivalently, a direct consequence of identity~\eqref{eq:KL-identity}).
The first bracket is a fixed quantity (independent of $\lambda$), while
the second is amplified by $\lambda$. For large $\lambda$ (small
$\beta$), any validity advantage $\muq{\pi'}>\muq{\pi}$ eventually
dominates, regardless of how the two policies compare in proximity to
the base model.

This has a specific consequence for the structure of autoregressive model families:

\paragraph{Near-Dirac policies are easy; $\pp$ is hard.}
An autoregressive model can easily concentrate its probability mass on a single valid output $y^*=[y^*_1,\ldots,y^*_T]\in\Y_1$: it suffices to learn, at each position~$t$, to assign high probability to the single token $y^*_t$ given the prefix $y^*_{<t}$. This is memorization of a single path through the token tree---a task so simple that even a deterministic $n$-gram model can accomplish it. Such a policy achieves $\muq{\pi}\approx 1$. In contrast, the filtered model $\pp=\aa(\cdot\mid\Y_1)$ requires spreading mass across all of $\Y_1$ in proportion to $\aa$---a global distributional constraint that demands coordinating the conditional distributions at every branching point of the token tree. This is a much harder task that autoregressive models are known to have difficulty satisfying, and in some cases provably cannot \citep{lin_limitations_2021,kim_guaranteed_2025}.

\paragraph{The mode-collapse mechanism.}
Under small $\beta$ (large $\lambda$), the optimizer must minimize $\KL{\pit}{\pl}$. Equation~\eqref{eq:KL-difference-candidates} shows that it is overwhelmingly rewarded for increasing $\muq{\pit}$. The policies in $\Pi_\Theta$ with the highest validity tend to be highly concentrated distributions---in the limit, near-Dirac masses on a small number of valid outputs. Therefore, the optimization path tends toward such policies---mode collapse---rather than toward $\pp$, which has $\muq{\pp}=1$ but is unreachable.

\begin{figure}[t]
    \centering
    \includegraphics[width=0.85\linewidth]{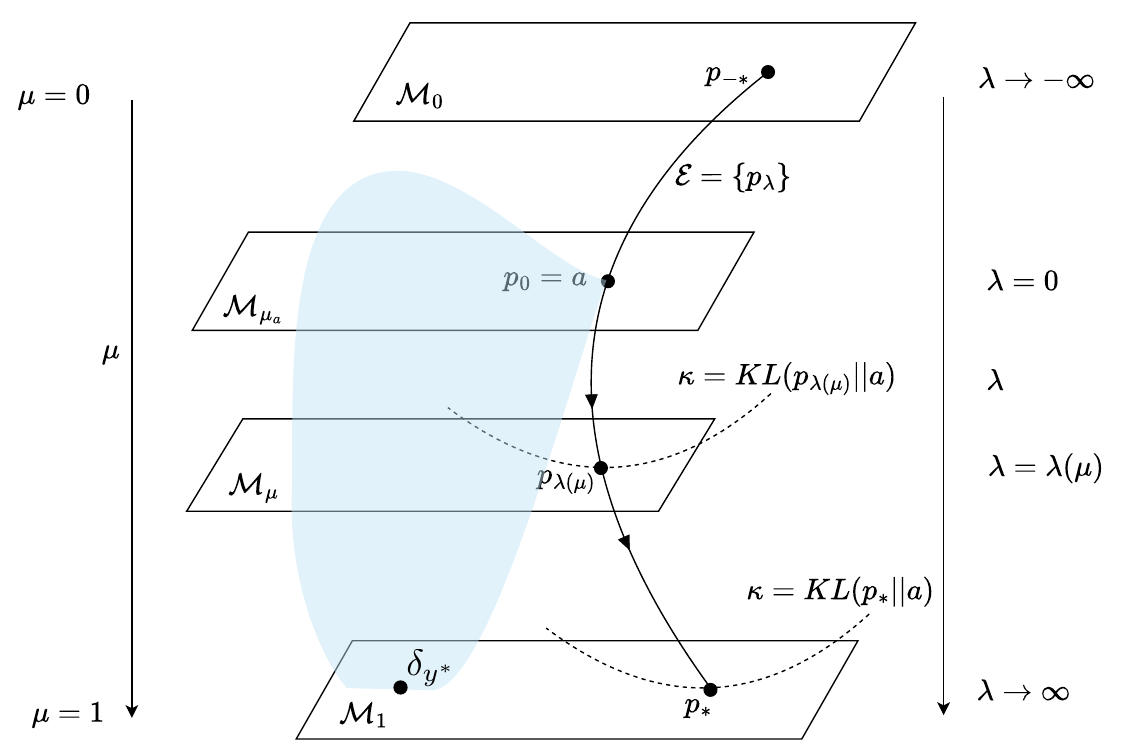}
    \caption{The information-geometry picture of Fig.~\ref{fig:binary-info-geom}, augmented with the parametric family $\Pi_\Theta$ (shaded region).
    The family $\Pi_\Theta$ contains $\aa$ but does not reach $\pp$.
    A near-Dirac policy $\delta_{y^*}$ on a single valid output lies on $\mathcal{M}_1$ (the fully valid slice) and is reachable by $\Pi_\Theta$, whereas $\pp$ lies on the same slice but is not reachable.
    Under large $\lambda$, the optimizer is pushed toward the high-validity boundary of $\Pi_\Theta$---where near-Dirac policies live---rather than toward $\pp$.}
    \label{fig:Pi-Theta-mode-collapse}
\end{figure}

Figure~\ref{fig:Pi-Theta-mode-collapse} illustrates this geometry. Both $\delta_{y^*}$ and $\pp$ lie on the fully valid slice $\mathcal{M}_1$, but only $\delta_{y^*}$ is accessible to $\Pi_\Theta$. As $\lambda$ increases, the target $\pl$ moves toward $\mathcal{M}_1$ along the exponential family curve, exiting $\Pi_\Theta$; the optimizer, unable to follow, is pushed toward the reachable end of $\mathcal{M}_1$---the near-Dirac policies.

We emphasize that this argument combines the exact decomposition~\eqref{eq:KL-difference-candidates} with structural observations about autoregressive families (the ease of memorization, the difficulty of matching $\pp$). The conclusion---that small $\beta$ drives mode collapse under misspecification---is a qualitative prediction grounded in these observations, not a formal guarantee. The following subsection illustrates it concretely on a toy autoregressive model.

\begin{myhighlight}%
KL-control is designed to prevent catastrophic forgetting of the base
model. Yet under misspecification and small $\beta$, it can drive toward
a policy that has maximally forgotten the base model's diversity:
a highly concentrated distribution, collapsing toward ever fewer
valid outputs as $\beta$ decreases. The filtered model $\pp$---the
ideal resolution, achieving perfect validity while preserving the base
model's ranking of valid outputs---sits outside the parametric family
and cannot serve as the optimization target.
\end{myhighlight}

\subsection{Toy illustration: bigram model with binary verifier}
\label{sec:toy-experiment}

We illustrate the mode-collapse mechanism on a minimal autoregressive example that is small enough for exact computation yet exhibits genuine misspecification.

\paragraph{Setup.}
Let $V=\{0,1,2\}$ be a three-token vocabulary and $\Y=V^3$ the set of all sequences of length~$3$, so $|\Y|=27$. The verifier checks whether the first token equals the last: $v(y_1,y_2,y_3)=\mathbf{1}[y_1=y_3]$. This gives $|\Y_1|=9$ valid sequences. The base model $\aa$ is an autoregressive model with full trigram conditionals $\aa(y_1)\,\aa(y_2\mid y_1)\,\aa(y_3\mid y_1,y_2)$, where each conditional is a softmax over logits drawn independently from $\mathcal{N}(0,0.25)$, producing a mildly non-uniform distribution (base validity $\Aone=\aa(\Y_1)\approx 0.35$).

\paragraph{Misspecification via the bigram constraint.}
We consider a \emph{bigram} policy class $\Pi_\Theta$ in which $\pit(y_3\mid y_1,y_2)$ depends only on~$y_2$, not on~$y_1$. Since the verifier requires $y_3=y_1$, the filtered model $\pp=\aa(\cdot\mid\Y_1)$ exhibits a long-range dependency between positions~$1$ and~$3$ that no bigram model can represent. The family $\Pi_\Theta$ therefore does not contain~$\pp$: it is misspecified.

\paragraph{Optimization.}
All quantities---$\Jb(\pit)$, its gradient, and all distributional metrics---are computed exactly by enumerating the $27$ sequences. We optimize $\Jb(\pit)$ by gradient ascent on the logits for a range of $\beta$ values, starting from the base model~$\aa$.

\paragraph{Reference policies.}
To assess what the bigram family is actually capable of, we compute two reference policies. The \emph{forward-KL-optimal bigram policy} $\pihat:=\argmin_{\pit\in\Pi_\Theta}\KL{\pp}{\pit}$ is computed by gradient descent (which finds the global optimum, since this objective is convex in the logits); it has moderate validity ($\muq{\pihat}\approx 0.39$) but low divergence from~$\pp$ ($\KL{\pp}{\pihat}\approx 1.02$). The \emph{TVD-optimal bigram policy} $\argmin_{\pit\in\Pi_\Theta}\TVD{\pit}{\pp}$ is estimated by multi-restart gradient descent with finite-difference gradients (a best-effort estimate, since this objective is non-convex in the logits); it achieves $\TVD{\pit}{\pp}\approx 0.37$. Both policies represent what the bigram family can achieve when optimized toward~$\pp$ under different criteria---neither is ever selected by the KL-control objective.

\begin{figure}[H]
    \centering
    \includegraphics[width=0.7\linewidth]{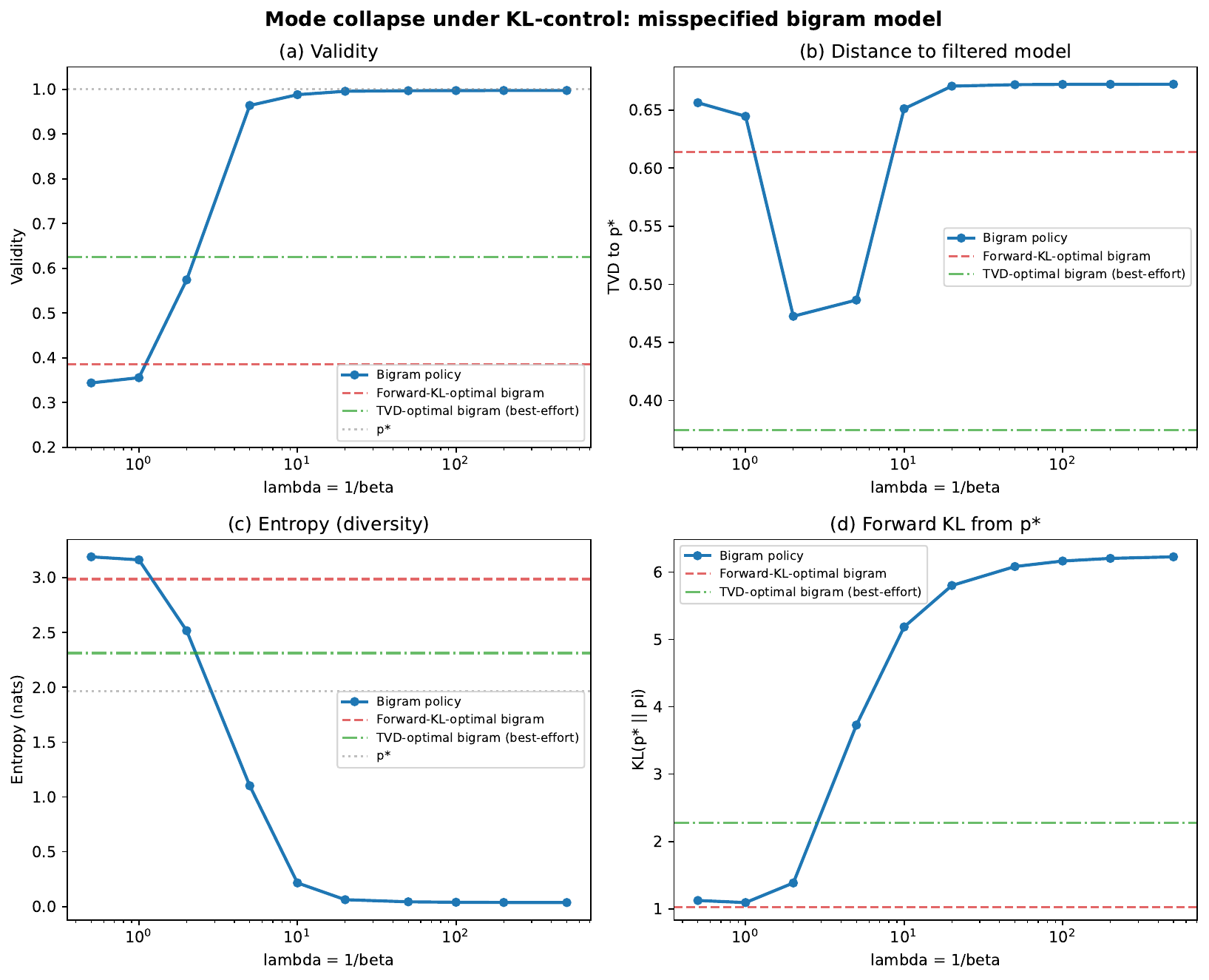}
    \caption{Mode collapse in a misspecified bigram model ($V=\{0,1,2\}$, $T=3$, verifier: $y_1=y_3$). Each panel plots a metric of the optimized bigram policy (solid blue) as a function of $\lambda=1/\beta$. The dashed red line marks the forward-KL-optimal bigram policy $\pihat$ (minimizer of $\KL{\pp}{\pit}$ over $\Pi_\Theta$, found by convex optimization); the dash-dotted green line marks the TVD-optimal bigram policy (best-effort minimizer of $\TVD{\pit}{\pp}$ over $\Pi_\Theta$, found by multi-restart gradient descent); the dotted gray line marks $\pp$ where applicable. As $\lambda$ increases, validity rises to~$1$ but the KL-control policy ends up dramatically farther from~$\pp$---in every metric---than either reference policy, despite both being achievable within the same bigram family $\Pi_\Theta$.}
    \label{fig:toy-mode-collapse}
\end{figure}

\paragraph{Results.}
Figure~\ref{fig:toy-mode-collapse} shows the key metrics as a function of $\lambda=1/\beta$. As $\lambda$ increases:
\begin{itemize}
    \item \emph{Validity} rises to ${\sim}1$, far above $\pihat$ and the TVD-optimal policy.
    \item \emph{TVD to $\pp$} initially decreases as the policy sheds mass from $\Y_0$, but then increases sharply once concentration sets in---the policy overshoots past $\pp$ into near-Dirac territory (see Appendix~\ref{app:tvd-dip} for discussion). At large $\lambda$, the TVD stabilizes near $0.67$---nearly \emph{twice} the ${\sim}0.37$ achievable by the TVD-optimal bigram policy within the same family $\Pi_\Theta$.
    \item \emph{Entropy} collapses from ${\sim}3.1$ (near $\aa$) to ${\sim}0.04$, compared with $\pp$'s entropy of~${\sim}2.0$.
    \item \emph{Forward KL from $\pp$} increases monotonically to ${\sim}6.3$, far exceeding that of $\pihat$ (${\sim}1.0$) and the TVD-optimal policy (${\sim}2.3$).
\end{itemize}
Table~\ref{tab:toy-top-seqs} confirms what these numbers mean concretely. At $\lambda=5$, the policy has concentrated $97\%$ of its mass on three ``diagonal'' sequences $(0,0,0)$, $(1,1,1)$, $(2,2,2)$---the valid sequences that a bigram model can represent by simply copying the previous token. At $\lambda\ge 10$, the mass has further concentrated onto a single sequence, $(0,0,0)$. The remaining six valid sequences, which $\pp$ assigns substantial probability to (together ${\sim}49\%$), are effectively abandoned.

The key observation is that this failure is not an inherent limitation of the bigram family: both reference policies achieve far better approximations of $\pp$ within $\Pi_\Theta$. The KL-control objective, by rewarding validity above all else at small $\beta$, actively drives the optimizer to a policy that is dramatically worse---by every measure of proximity to $\pp$---than what the parametric family is capable of. This is the mode-collapse mechanism of Section~\ref{sec:mode-collapse} in action.

Results are robust across different random base models; see Appendix~\ref{app:toy-details} for a multi-seed analysis.

\begin{table}[H]
\centering
\small
\renewcommand{\arraystretch}{1.1}
\begin{tabular}{@{}ccl@{}}
\toprule
$\lambda$ & Entropy & Top sequences (probability) \\
\midrule
$2$  & $2.52$ & $(0,0,0)$: 37\%;\; spread across ${\sim}19$ sequences \\
$5$  & $1.10$ & $(0,0,0)$: 63\%;\; $(1,1,1)$: 20\%;\; $(2,2,2)$: 14\% \\
$10$ & $0.22$ & $(0,0,0)$: 97\%;\; $(0,2,0)$: 1.7\%;\; $(0,1,0)$: 0.5\% \\
$20$ & $0.06$ & $(0,0,0)$: 99.3\% \\
$100$ & $0.04$ & $(0,0,0)$: 99.6\% \\
\bottomrule
\end{tabular}
\caption{Top sequences and their probabilities for the optimized bigram policy at selected values of $\lambda=1/\beta$. At intermediate $\lambda$, the policy concentrates on the three ``diagonal'' valid sequences (where all tokens coincide); for large $\lambda$, it collapses further onto a single sequence.}
\label{tab:toy-top-seqs}
\end{table}

\begin{rem}[Well-specified case and the optimization gap]\label{rem:well-specified}
When the policy class is a full trigram model (no bigram restriction), the family $\Pi_\Theta$ contains $\pp$ and the misspecification gap vanishes. Exact gradient descent on $\Jb(\pit)$ initialized at $\aa$ nonetheless exhibits collapse for large $\lambda$: the steep validity gradient early in optimization funnels the trajectory toward a near-Dirac basin before it can reach the neighborhood of $\pp$. This is an \emph{optimization gap}, not a misspecification gap. Warm-starting from a moderate-$\lambda$ solution substantially reduces the collapse, confirming its path-dependent nature. We do not pursue this further here, but note that it suggests the difficulties of small-$\beta$ optimization extend beyond the misspecified setting.
\end{rem}

% ============================================================
\section{Discussion}
\label{sec:discussion}
% ============================================================

\subsection{The divergence choice problem}
\label{sec:divergence-choice}

A recurring theme of this paper is that $\pp$ is the natural target for binary-reward RL, but the standard KL-control objective cannot target it directly.

The obstacle is structural. KL-control works by minimizing the reverse KL, $\KL{q}{\pb}$, and $\pb$ is full-support, so this is always well-defined. But $\pp$ is \emph{not} full-support: it assigns zero probability to all $y\in\Y_0$. Consequently, $\KL{q}{\pp}=\infty$ for any $q$ that puts mass on $\Y_0$---which includes every policy $\pit$ in a standard autoregressive family. The reverse KL, $\KL{\cdot}{\pp}$, is simply not a viable objective.

This disqualification is specific to the choice of divergence and to the argument position of $\pp$. The \emph{forward} KL, $\KL{\pp}{q}$, is perfectly well-defined whenever $q$ has full support, and autoregressive policies typically do. Minimizing $\KL{\pp}{\pit}$ over $\Pi_\Theta$ is therefore a well-posed optimization problem that targets $\pp$ directly. This was recognized early on by \citet{khalifa_distributional_2021}, who optimized the forward KL to the filtered model (under the name ``pointwise constraint'') as part of a broader distributional approach, and pursued further by \citet{kim_guaranteed_2025} with explicit focus on $\pp$.

More generally, $\alpha$-divergences \citep{renyi_measures_1961,amari_information_2016} $D_\alpha(\pit\|\pp)$ are finite for appropriate choices of $\alpha$ even when $\pp$ has restricted support, offering a family of interpolating objectives between the forward KL ($\alpha\to 0$) and the reverse KL ($\alpha\to 1$) \citep{kruszewski_whatever_2026}. In a related direction, \citet{go_aligning_2023} advocate f-divergence minimization for aligning language models with a target distribution, though without specific focus on filtered models such as $\pp$.

The practical implications are significant. The mode-collapse mechanism identified in Section~\ref{sec:mode-collapse} is driven by the reverse-KL structure of the standard objective. A forward-KL or $\alpha$-divergence objective that targets $\pp$ directly could in principle avoid this failure mode, since it would reward \emph{coverage} of $\pp$'s support rather than \emph{concentration} on high-validity outputs.

The key point is not merely that alternative divergences are worth exploring, but that the \emph{standard} divergence used in KL-control is provably unable to target $\pp$ at all. This is a structural impossibility, not a matter of degree: no amount of tuning $\beta$ can make the reverse KL to $\pp$ a well-defined objective. The analysis of Section~\ref{sec:mode-collapse} shows that this impossibility has concrete consequences---mode collapse under misspecification is driven precisely by the reverse-KL structure.

Recent empirical work corroborates this picture. Extending the forward-KL approach of \citet{khalifa_distributional_2021} and \citet{kim_guaranteed_2025} to the full $\alpha$-divergence family, \citet{kruszewski_whatever_2026} introduce $\alpha$-DPG, which minimizes $D_\alpha(\pit\|\pp)$ for varying $\alpha$, with the theoretical underpinning provided by their Support Decomposition theorem (Appendix~H, Theorem~5). They find that lower $\alpha$ (closer to the forward KL) yields better diversity, tracing a Pareto frontier between precision and coverage. Independently, \citet{li_choice_2025} identify the reverse KL as a primary cause of diversity collapse and show that forward-KL or Jensen--Shannon alternatives significantly improve both pass@1 and pass@$k$. Of course, optimizing $\KL{\pp}{\pit}$ presents its own challenges---it requires sampling from $\pp$ or a good approximation---but the theoretical case for moving beyond the reverse KL is clear.

\subsection{Summary and broader perspective}
\label{sec:summary}

We have argued that binary rewards create a specific, identifiable pathology for KL-controlled reinforcement learning:

\begin{enumerate}
    \item The \reinforce objective is degenerate: the set of optimal distributions $\PP_1$ is infinite, with no distinguished element (Section~\ref{sec:binary}).
    \item KL-control resolves this degeneracy by selecting the filtered model $\pp$ in the $\beta\to 0$ limit (Section~\ref{sec:convergence}). The convergence is one-sided: forward KL converges, reverse KL stays infinite.
    \item The hyperparameter $\beta$ is opaque; the target validity rate $\mu$ is more interpretable and invariant across base models of different strength (Section~\ref{sec:beta-vs-mu}).
    \item Under misspecification, optimization pressure for small $\beta$ tends toward mode collapse on a small number of valid outputs, not toward $\pp$ (Section~\ref{sec:mode-collapse}), as confirmed by a toy autoregressive experiment (Section~\ref{sec:toy-experiment}).
\end{enumerate}

The distribution-matching perspective---viewing KL-control as projecting onto a target $\pb$ rather than as reward maximization with a penalty---is essential for understanding both what the method achieves (resolving the degeneracy, in the ideal case) and where it fails (mode collapse, under misspecification).

Several directions remain open. The present paper establishes a structural
reason why the standard reverse-KL objective cannot target $\pp$ directly,
and recent work \citep{kruszewski_whatever_2026,li_choice_2025} shows
empirically that alternative divergences improve diversity. What is missing
is a precise theoretical account of the divergence choice problem under
misspecification: which divergences, or more generally which comparisons
between distributions, achieve which guarantees when $\pp$ is unreachable?
The relationship between the misspecification gap and the architecture of
the policy class also remains largely unexplored. More broadly, it would
be interesting to extend the analysis to continuous rewards, multi-level
structures, or conditional distributions $p(y\mid x)$.

\section*{Acknowledgments}
The author thanks Germán Kruszewski for insightful comments on an earlier version of this paper.

%%%%%%%%%%%%%%%%%%%%%%%%%%%%%%%%%%%%%%%%%%%%%%%%%%%%%%%%%%%%%%%%%%
\section*{AI Disclosure}
%%%%%%%%%%%%%%%%%%%%%%%%%%%%%%%%%%%%%%%%%%%%%%%%%%%%%%%%%%%%%%%%%%

The author used Claude (Anthropic, \texttt{claude.ai}) and ChatGPT (OpenAI) 
during the preparation of this manuscript for assistance with exposition, 
structuring arguments, reviewing text and proof drafts, and developing and 
debugging the code for the toy experiment. The author reviewed and edited 
all AI-assisted content, and takes full responsibility for the correctness of this paper.

% ============================================================
% BIBLIOGRAPHY
% ============================================================

% ============================================================
% APPENDICES
% ============================================================

\newpage
% ============================================================
% APPENDICES
% ============================================================

\newpage
\appendix

\section{Proof of Proposition~\ref{thm:pstar-I-proj}: $\pp$ is the I-projection of $\aa$ onto $\PP_1$}
\label{app:p*-is-I-proj-proof}

\begin{proof}
For any $q\in\PP_1$ (i.e., $\supp(q)\subseteq\Y_1$), every $y$ in the support of $q$ satisfies $v(y)=1$, so $\aa(y) = \Aone\cdot\pp(y)$. Therefore
\begin{align*}
    \KL{q}{\aa}
    &= \sum_{y\in\Y_1} q(y)\log\frac{q(y)}{\aa(y)}
     = \sum_{y\in\Y_1} q(y)\log\frac{q(y)}{\Aone\,\pp(y)} \\
    &= \sum_{y\in\Y_1} q(y)\log\frac{q(y)}{\pp(y)} + \sum_{y\in\Y_1} q(y)\log\frac{1}{\Aone} \\
    &= \KL{q}{\pp} + \log\frac{1}{\Aone}.
\end{align*}
Since $\log(1/\Aone)$ is a positive constant independent of $q$, the minimum of $\KL{q}{\aa}$ over $\PP_1$ is attained when $\KL{q}{\pp}=0$, i.e., when $q=\pp$. The minimum value is $\KL{\pp}{\aa}=\log(1/\Aone)=-\log\Aone$.
\end{proof}

\section{Proof of Theorem~\ref{thm:main}: Convergence of $\pb$ to $\pp$}
\label{app:proof:convergence}

We work throughout in the binary case $r(y)=v(y)\in\{0,1\}$, using the parametrization $\lambda=1/\beta$, so that $\beta\to 0^+$ corresponds to $\lambda\to+\infty$. Then $\Zl=\Azero+\Aone\,e^\lambda$ and, for $y\in\Y$,
\[
\pl(y) = \frac{\aa(y)\,e^{\lambda\,v(y)}}{\Azero+\Aone\,e^\lambda}
= \begin{cases}
    \dfrac{\aa(y)\,e^\lambda}{\Azero+\Aone\,e^\lambda} & \text{if } y\in\Y_1, \\[8pt]
    \dfrac{\aa(y)}{\Azero+\Aone\,e^\lambda} & \text{if } y\in\Y_0.
\end{cases}
\]

\begin{proof}[Proof of \ref{thm:main:a} (pointwise convergence)]
For $y\in\Y_1$:
\[
\pl(y) = \frac{\aa(y)\,e^\lambda}{\Azero+\Aone\,e^\lambda}
= \frac{\aa(y)}{\Azero\,e^{-\lambda}+\Aone}
\;\xrightarrow{\lambda\to+\infty}\;
\frac{\aa(y)}{\Aone} = \pp(y).
\]
For $y\in\Y_0$:
\[
\pl(y) = \frac{\aa(y)}{\Azero+\Aone\,e^\lambda}
\;\xrightarrow{\lambda\to+\infty}\; 0 = \pp(y). \qedhere
\]
\end{proof}

\begin{proof}[Proof of \ref{thm:main:b} (total variation)]
We compute the total variation directly. On $\Y_0$, $\pp(y)=0$, so $|\pp(y)-\pl(y)|=\pl(y)$ and the contribution is
$\sum_{y\in\Y_0}\pl(y) = \pl(\Y_0)$.
On $\Y_1$, both $\pp(y)$ and $\pl(y)$ are positive multiples of $\aa(y)$, and a short calculation gives
\[
\pp(y)-\pl(y) = \frac{\aa(y)}{\Aone}-\frac{\aa(y)\,e^\lambda}{\Zl}
= \frac{\aa(y)\,\Azero}{\Aone\,\Zl}\;\ge\;0,
\]
so the contribution is $\sum_{y\in\Y_1}(\pp(y)-\pl(y)) = 1 - \pl(\Y_1) = \pl(\Y_0)$. Combining,
\[
\TVD{\pp}{\pl}
= \tfrac{1}{2}\bigl(\pl(\Y_0)+\pl(\Y_0)\bigr)
= \pl(\Y_0)
= \frac{\Azero}{\Azero+\Aone\,e^\lambda}
\;\xrightarrow{\lambda\to+\infty}\; 0.\qedhere
\]
\end{proof}

\begin{proof}[Proof of \ref{thm:main:c} (forward KL)]
\begin{align*}
\KL{\pp}{\pl}
&= \sum_{y\in\Y_1}\pp(y)\log\frac{\pp(y)}{\pl(y)}
= \sum_{y\in\Y_1}\pp(y)\log\frac{\aa(y)/\Aone}{\aa(y)\,e^\lambda/\Zl} \\
&= \sum_{y\in\Y_1}\pp(y)\log\frac{\Zl}{\Aone\,e^\lambda}
= \log\frac{\Zl}{\Aone\,e^\lambda}.
\end{align*}
Now $\Zl/(\Aone\,e^\lambda) = (\Azero+\Aone\,e^\lambda)/(\Aone\,e^\lambda) = 1 + \Azero\,e^{-\lambda}/\Aone$. Therefore
\[
\KL{\pp}{\pl} = \log\bigl(1 + \tfrac{\Azero}{\Aone}\,e^{-\lambda}\bigr) \;\xrightarrow{\lambda\to+\infty}\; 0. \qedhere
\]
\end{proof}

\begin{proof}[Proof of \ref{thm:main:d} (reverse KL is infinite)]
For every $\beta>0$ (equivalently, every finite $\lambda$), $\pl$ has full support: $\pl(y)>0$ for all $y\in\Y$. In particular, $\pl(y)>0$ for every $y\in\Y_0$. But $\pp(y)=0$ for $y\in\Y_0$, so
\[
\KL{\pl}{\pp} = \sum_{y\in\Y}\pl(y)\log\frac{\pl(y)}{\pp(y)} = +\infty,
\]
since the sum includes terms $\pl(y)\log(\pl(y)/0)=+\infty$ for $y\in\Y_0$ with $\pl(y)>0$.
\end{proof}

\section{Proof of the KL Difference Identity}
\label{app:kl-identity-proof}

We prove identity~\eqref{eq:KL-identity}: for any $q\in\PP$ and any $\lambda_1,\lambda_2\in\R$,
\[
\KL{q}{\pl[\lambda_2]} - \KL{q}{\pl[\lambda_1]}
= A(\lambda_2) - A(\lambda_1) + \muq{q}\,(\lambda_1 - \lambda_2),
\]
where $\muq{q}:=\E_q[r]$ and $A(\lambda):=\log\Zl$.

\begin{proof}
By definition of $\pl$, $\log\pl(y) = \log\aa(y) + \lambda\,r(y) - A(\lambda)$. Hence
\begin{align*}
\KL{q}{\pl[\lambda_2]} - \KL{q}{\pl[\lambda_1]}
&= \E_q\!\left[\log\frac{\pl[\lambda_1](y)}{\pl[\lambda_2](y)}\right] \\
&= \E_q\bigl[(\lambda_1-\lambda_2)\,r(y) - A(\lambda_1) + A(\lambda_2)\bigr] \\
&= A(\lambda_2) - A(\lambda_1) + \muq{q}\,(\lambda_1 - \lambda_2). \qedhere
\end{align*}
\end{proof}

Setting $\lambda_1=0$ (so $\pl[\lambda_1]=\aa$ and $A(\lambda_1)=0$), $\lambda_2=\lambda=1/\beta$, and rearranging recovers Proposition~\ref{thm:korbak:a}: $\Jb(q) = \beta[\log\Zb - \KL{q}{\pb}]$. The general multi-dimensional version of this identity, and a systematic derivation of exponential family theory from it, is developed in \citet{dymetman_exponential_2026}.

\section{Information Geometry of KL-Control: General Bounded Case}
\label{app:general-info-geom}

This appendix develops the full information-geometric picture for arbitrary bounded rewards, of which the binary case treated in the main text is a specialization. The results here are standard in the theory of exponential families; \citet{dymetman_exponential_2026} offers a self-contained and unified treatment, deriving the Pythagorean theorem, I-projection characterizations, Legendre duality, and KL-regularized optimization as short consequences of the KL difference identity~\eqref{eq:KL-identity}, in the general multi-dimensional setting.

\paragraph{What generalizes and what does not.}
The binary case of Section~\ref{sec:binary-formulas} is special in that the reward takes only two values, so $\Zl=\Azero+\Aone\,e^\lambda$ collapses to a two-term sum and the moment map $\mu(\lambda)$, its inverse $\lambda(\mu)$, and the divergence cost $\kappa(\mu)$ all admit closed elementary expressions in terms of $\Aone$ and $\Azero$. For a general bounded reward $r$, no such simplification is available: $A(\lambda)=\log\sum_y\aa(y)\,e^{\lambda r(y)}$ is in general only implicitly defined, and $\mu$, $\lambda$, $\kappa$ are mutually related by smooth bijections without elementary closed forms. What \emph{does} survive is the geometric picture---the exponential family curve threading through the moment slices, the I-projection characterization, the Legendre duality, and the tangency of KL sublevel sets---which is what this appendix develops.

\subsection{Moment map and exponential family}

Let $r:\Y\to\R$ be bounded and non-constant, with $m:=\inf_y r(y)$ and $M:=\sup_y r(y)$ finite but not necessarily attained. Consider the exponential family $\EE=\{\pl\}_{\lambda\in\R}$ defined by~\eqref{eq:1-dim-exp-fam}, with log-partition function $A(\lambda)=\log\Zl$ and moment map $\mu(\lambda):=\E_{\pl}[r]$.

\begin{prop}[Moment map for bounded rewards]\label{thm:moment-map-bounded}
$A$ is finite and infinitely differentiable on $\R$, with
$A'(\lambda)=\mu(\lambda)$ and $A''(\lambda)=\Var_{\pl}(r)$.
Since $r$ is non-constant and $\pl$ has full support, $\Var_{\pl}(r)>0$, so $A$ is strictly convex and $\lambda\mapsto\mu(\lambda)$ is strictly increasing, with
$\lim_{\lambda\to-\infty}\mu(\lambda)=m$ and $\lim_{\lambda\to+\infty}\mu(\lambda)=M$.
Thus $\lambda\mapsto\mu(\lambda)$ is a bijection from $\R$ onto $(m,M)$, with inverse $\mu\mapsto\lambda(\mu)$.
\end{prop}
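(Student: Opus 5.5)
The plan has three steps: establish finiteness and differentiability of $A$ by dominated convergence, derive strict convexity and monotonicity from the variance formula, and compute the limits of $\mu(\lambda)$ by a mass-concentration argument.

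\textbf{Finiteness and smoothness.} Since $m\le r(y)\le M$, we have $e^{\lambda m}\le e^{\lambda r(y)}\le e^{\lambda M}$ for $\lambda\ge 0$ (reversed for $\lambda<0$). Hence $\Zl$ lies between $e^{\min(\lambda m,\lambda M)}$ and $e^{\max(\lambda m,\lambda M)}$, so $\Zl\in(0,\infty)$ and $A$ is finite. For differentiability, I differentiate $\Zl=\sum_y \aa(y)e^{\lambda r(y)}$ termwise. On any compact interval $|\lambda|\le L$, the $k$-th derivative of each term is bounded by $\aa(y)\,C^k e^{LC}$ with $C:=\max(|m|,|M|)$. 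This bound is summable because $\sum_y\aa(y)=1$, so dominated convergence (or uniform convergence of the derivative series) gives $\Zl\in C^\infty$ with $Z^{(k)}_\lambda=\sum_y\aa(y)r(y)^ke^{\lambda r(y)}$. Then $A'=Z'_\lambda/\Zl=\E_{\pl}[r]=\mu(\lambda)$, and
\[
A''=\frac{Z''_\lambda}{\Zl}-\Bigl(\frac{Z'_\lambda}{\Zl}\Bigr)^2=\E_{\pl}[r^2]-\E_{\pl}[r]^2=\Var_{\pl}(r).
\]

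\textbf{Strict monotonicity.} Since $\aa$ has full support, so does $\pl$. Because $r$ is non-constant, there are $y,y'$ with $r(y)\neq r(y')$, both with positive $\pl$-mass, so $\Var_{\pl}(r)>0$. Thus $A''>0$, $A$ is strictly convex, and $\mu=A'$ is strictly increasing and continuous.

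\textbf{Limits (main obstacle).} This is the delicate step when $M$ is not attained, since there is no single point on which mass concentrates. I will handle it by a concentration estimate. Fix $\varepsilon>0$ and let $S_\varepsilon:=\{y: r(y)>M-\varepsilon\}$. This set is nonempty by the definition of the supremum, so $a_\varepsilon:=\aa(S_\varepsilon)>0$ by full support. Then
\[
\pl(S_\varepsilon^c)\le \frac{e^{\lambda(M-\varepsilon)}}{a_\varepsilon e^{\lambda(M-\varepsilon/2)}}\cdot\frac{1}{1}
\]
after also restricting to $S_{\varepsilon/2}$ (nonempty, with positive mass $a_{\varepsilon/2}$): indeed $\Zl\ge a_{\varepsilon/2}e^{\lambda(M-\varepsilon/2)}$, while $\Zl\,\pl(S_\varepsilon^c)\le e^{\lambda(M-\varepsilon)}$. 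Hence $\pl(S_\varepsilon^c)\le e^{-\lambda\varepsilon/2}/a_{\varepsilon/2}\to0$. Consequently
\[
\mu(\lambda)\ge (M-\varepsilon)\bigl(1-\pl(S_\varepsilon^c)\bigr)+m\,\pl(S_\varepsilon^c)\to M-\varepsilon .
\]
Together with $\mu\le M$ and the arbitrariness of $\varepsilon$, this gives $\mu(\lambda)\to M$. The limit $\lambda\to-\infty$ follows by applying the same argument to $-r$, or symmetrically with $\{r<m+\varepsilon\}$.

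\textbf{Bijection.} Strict monotonicity gives $m<\mu(\lambda)<M$ for all $\lambda$ (strictness because $\mu$ is strictly increasing toward its limits). Continuity, monotonicity and the two limits then give a bijection $\R\to(m,M)$ by the intermediate value theorem, so the inverse $\lambda(\mu)$ exists.
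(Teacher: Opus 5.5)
Your proof is correct, and it does more than the paper does. The paper gives no argument for this proposition: it cites Wainwright--Jordan (Proposition~3.1) for $A'=\mu$, $A''=\Var_{\pl}(r)$ and strict convexity, and simply asserts the endpoint limits $\mu(\lambda)\to m,M$ and the bijection onto $(m,M)$.

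You instead justify termwise differentiation directly, via the uniform bound $\aa(y)\,C^k e^{LC}$. More importantly, you supply the one step that is not a routine cumulant computation: the limit when the supremum $M$ is not attained. Your concentration estimate $\pl(S_\varepsilon^c)\le e^{-\lambda\varepsilon/2}/\aa(S_{\varepsilon/2})$ does this using only full support of $\aa$ and the definition of the supremum. Your monotonicity-plus-limits argument for the bijection onto $(m,M)$ is likewise complete.

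The citation route buys brevity and places the result inside general exponential-family theory, where surjectivity of $A'$ onto the interior of the set of achievable means is a general theorem for minimal families. Your route buys a self-contained, elementary argument that makes the unattained-bound case transparent.

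One cosmetic fix is needed. The first display in your limits step has $a_\varepsilon$ in the denominator where it should have $\aa(S_{\varepsilon/2})$, and it carries a stray factor $\frac{1}{1}$. The sentence after the display states the correct bound, but you should also say explicitly that both inequalities feeding it use $\lambda\ge 0$.
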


\noindent The identification $A'=\mu$, the strict convexity of $A$, and the characterization $A''=\Var_{\pl}(r)$ are standard; see, e.g., \citet[Proposition~3.1]{wainwright_graphical_2008}.

\subsection{Moment slices and I-projections}

For $\mu\in[m,M]$, define the moment slice $\mathcal{M}_\mu:=\{q\in\PP:\E_q[r]=\mu\}$.
Since $\mu(\lambda)$ is a bijection onto $(m,M)$, the exponential family curve intersects each interior slice $\mathcal{M}_\mu$ ($\mu\in(m,M)$) at exactly one point, namely $\pl[\lambda(\mu)]$.

\begin{prop}[I-projection onto moment slices]
For $\mu\in(m,M)$, $\pl[\lambda(\mu)] = \argmin_{q\in\mathcal{M}_\mu}\KL{q}{\aa}$.
\end{prop}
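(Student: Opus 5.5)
The plan is to use the KL difference identity~\eqref{eq:KL-identity} with $\lambda_1=0$ and $\lambda_2=\lambda:=\lambda(\mu)$. Since $\pl[0]=\aa$ and $A(0)=\log\sum_y\aa(y)=0$, the identity reads, for every $q\in\PP$,
\[
\KL{q}{\aa} \;=\; \KL{q}{\pl[\lambda]} + \lambda\,\muq{q} - A(\lambda).
\]
For $q\in\mathcal{M}_\mu$ we have $\muq{q}=\mu$. The term $\lambda\mu-A(\lambda)$ is then a constant independent of $q$, so
\[
\KL{q}{\aa}=\KL{q}{\pl[\lambda(\mu)]}+c(\mu),
\qquad c(\mu):=\lambda(\mu)\,\mu-A(\lambda(\mu)).
\]
This is a Pythagorean-type decomposition. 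Minimizing $\KL{q}{\aa}$ over $\mathcal{M}_\mu$ is therefore equivalent to minimizing $\KL{q}{\pl[\lambda(\mu)]}$ over $\mathcal{M}_\mu$.

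The second step is to observe that $\pl[\lambda(\mu)]\in\mathcal{M}_\mu$. This holds by the definition of $\lambda(\mu)$ as the inverse of the moment map, whose existence on $(m,M)$ is guaranteed by Proposition~\ref{thm:moment-map-bounded}. Since $\KL{q}{\pl[\lambda(\mu)]}\ge 0$, with equality if and only if $q=\pl[\lambda(\mu)]$, it follows that $\pl[\lambda(\mu)]$ is the unique minimizer. The minimum value is $c(\mu)=\KL{\pl[\lambda(\mu)]}{\aa}=\kappa(\mu)$, which also identifies $\kappa$ as the Legendre transform of $A$ evaluated at $\mu$.

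The main obstacle is making the identity legitimate when some terms may be infinite. The sample space $\Y$ is countable, and $q$ is arbitrary, so $\KL{q}{\aa}$ may equal $+\infty$. I would handle this as follows. Because $r$ is bounded, $\log(\aa/\pl[\lambda])=A(\lambda)-\lambda r$ is a bounded function. Hence $\KL{q}{\aa}$ and $\KL{q}{\pl[\lambda]}$ differ by $\E_q$ of a bounded function: either both are finite and the identity holds, or both are $+\infty$. In the infinite case the inequality $\KL{q}{\aa}\ge c(\mu)$ holds trivially. The expectation $\muq{q}$ is always finite since $r$ is bounded. With this caveat in place, the argument goes through for every $q\in\mathcal{M}_\mu$. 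Note that the proof of~\eqref{eq:KL-identity} given in Appendix~\ref{app:kl-identity-proof} implicitly assumes finiteness, so the caveat is worth stating explicitly.
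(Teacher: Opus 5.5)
Your proof is correct and is the paper's argument: apply identity~\eqref{eq:KL-identity} with $\lambda_1=0$, $\lambda_2=\lambda(\mu)$, so that on $\mathcal{M}_\mu$ the quantities $\KL{q}{\aa}$ and $\KL{q}{\pl[\lambda(\mu)]}$ differ by a $q$-independent constant, and then conclude from $\pl[\lambda(\mu)]\in\mathcal{M}_\mu$ and Gibbs' inequality. Your extra step for the case $\KL{q}{\aa}=+\infty$ uses the boundedness of $\log(\aa/\pl[\lambda])=A(\lambda)-\lambda r$; it is a valid refinement of a point the paper's proof leaves implicit.
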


\begin{proof}
Apply identity~\eqref{eq:KL-identity} with $\lambda_1=0$, $\lambda_2=\lambda(\mu)$, and $q\in\mathcal{M}_\mu$ (so $\muq{q}=\mu$):
\[
\KL{q}{\pl[\lambda(\mu)]} - \KL{q}{\aa} = A(\lambda(\mu)) - \mu\,\lambda(\mu).
\]
The right-hand side is independent of $q$, so minimizing $\KL{q}{\aa}$ over $\mathcal{M}_\mu$ is equivalent to minimizing $\KL{q}{\pl[\lambda(\mu)]}$ over $\mathcal{M}_\mu$. Since $\pl[\lambda(\mu)]\in\mathcal{M}_\mu$, the latter minimum is~$0$, attained uniquely at $q=\pl[\lambda(\mu)]$.
\end{proof}

Thus the exponential family curve can be viewed as the locus of I-projections of $\aa$ onto the stacked moment slices.

\subsection{Legendre dual and the $\lambda\leftrightarrow\mu\leftrightarrow\kappa$ bijection}

The convex conjugate of $A$ is
$A^*(\mu):=\sup_{\lambda\in\R}\{\lambda\mu-A(\lambda)\}$.

\begin{prop}[Legendre dual]\label{thm:legendre-dual-bounded-app}
\begin{enumerate}
\item $A^*(\mu)<\infty$ for $\mu\in(m,M)$ and $A^*(\mu)=+\infty$ for $\mu\notin[m,M]$. At the endpoints, finiteness holds iff the bound is attained.

\item For $\mu\in(m,M)$, the supremum is attained at $\lambda(\mu)$ and $A^*(\mu)=\lambda(\mu)\mu-A(\lambda(\mu))$.

\item $A^*(\mu)=\KL{\pl[\lambda(\mu)]}{\aa}=\min_{q:\,\E_q[r]=\mu}\KL{q}{\aa}$.
\end{enumerate}
\end{prop}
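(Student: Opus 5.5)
The plan is to analyse the concave function $g_\mu(\lambda):=\lambda\mu-A(\lambda)$ using Proposition~\ref{thm:moment-map-bounded}. That proposition gives $g_\mu'(\lambda)=\mu-\mu(\lambda)$ and $g_\mu''(\lambda)=-\Var_{\pl}(r)<0$, and the sign of $g_\mu'$ settles parts 1 and 2 together.

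\emph{Interior case.} For $\mu\in(m,M)$, the bijectivity of the moment map gives a unique zero $\lambda(\mu)$ of $g_\mu'$. The derivative $g_\mu'$ is positive to the left of $\lambda(\mu)$ and negative to the right. So $\lambda(\mu)$ is the global maximizer, $A^*(\mu)=\lambda(\mu)\mu-A(\lambda(\mu))<\infty$, and part 2 holds.

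\emph{Exterior case.} For $\mu>M$, write $A(\lambda)\le \lambda M$ for $\lambda>0$, since $\Zl\le e^{\lambda M}$. Then $g_\mu(\lambda)\ge\lambda(\mu-M)\to\infty$. The case $\mu<m$ is symmetric, using $\lambda\to-\infty$.

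\emph{Endpoint case.} Take $\mu=M$; the case $\mu=m$ is symmetric. Here $g_M'=M-\mu(\lambda)>0$, so $g_M$ is increasing and $A^*(M)=\lim_{\lambda\to\infty}[\lambda M-A(\lambda)]=-\lim_{\lambda\to\infty}\log\sum_y\aa(y)e^{\lambda(r(y)-M)}$.
\begin{itemize}
\item If $M$ is attained on $Y_M:=\{y: r(y)=M\}$, monotone (or dominated) convergence gives $\sum_y\aa(y)e^{\lambda(r(y)-M)}\to\aa(Y_M)>0$. Hence $A^*(M)=-\log\aa(Y_M)<\infty$.
\item If $M$ is not attained, every term tends to $0$. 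The terms are dominated by $\aa(y)$, so dominated convergence makes the sum tend to $0$, and $A^*(M)=+\infty$.
\end{itemize}
This endpoint analysis is the main subtle step. The key point is to use domination by the summable $\aa$ on a countable $\Y$, rather than any finiteness of $\Y$.

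For part 3, apply identity~\eqref{eq:KL-identity} with $q=\pl[\lambda(\mu)]$, $\lambda_1=\lambda(\mu)$ and $\lambda_2=0$. This yields $\KL{\pl[\lambda(\mu)]}{\aa}=\mu\lambda(\mu)-A(\lambda(\mu))$, which equals $A^*(\mu)$ by part 2. The equality with $\min_{q\in\mathcal{M}_\mu}\KL{q}{\aa}$ is then exactly the I-projection proposition just proved, which states that the minimum is attained at $\pl[\lambda(\mu)]$.
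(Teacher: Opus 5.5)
Your proof is correct and follows the same route as the paper, which only sketches it. In both, parts 1--2 come from the concavity of $\lambda\mapsto\lambda\mu-A(\lambda)$, whose derivative is $\mu-\mu(\lambda)$, and part 3 comes from the Legendre identity together with the I-projection proposition. Your explicit endpoint computation $A^*(M)=-\lim_{\lambda\to\infty}\log\sum_y\aa(y)e^{\lambda(r(y)-M)}=-\log\aa(\Y_M)$, via dominated convergence, is a useful addition. The limit $\mu(\lambda)\to M$ that the paper's sketch appeals to holds whether or not $M$ is attained, so it cannot by itself decide finiteness at the endpoint.
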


\noindent Parts (1)--(2) follow from the strict convexity of $A$ and the boundary behavior of $\mu(\lambda)$. Part~(3) combines the Legendre identity $A^*(\mu)=\lambda\mu-A(\lambda)$ with the I-projection characterization above; see \citet[Chapter~3]{wainwright_graphical_2008}.

Writing $\kappa(\mu):=A^*(\mu)$, we obtain a strictly increasing bijection $\mu\leftrightarrow\kappa$ over $[\muq{\aa},M)$, which, combined with $\lambda\leftrightarrow\mu$, gives the three-way bijection $\lambda\leftrightarrow\mu\leftrightarrow\kappa$ over $[0,+\infty)\leftrightarrow[\muq{\aa},M)\leftrightarrow[0,\kappa(M))$.

\subsection{KL sublevel sets and tangency}

For $\kappa\ge 0$, the sublevel set $\PP^\kappa:=\{q\in\PP:\KL{q}{\aa}\le\kappa\}$ is convex. The function $\kappa(\mu)$ is strictly convex and strictly increasing on $[\muq{\aa},M)$, and the boundary of $\PP^{\kappa(\mu)}$ meets the moment slice $\mathcal{M}_\mu$ at exactly one point, $\pl[\lambda(\mu)]$. Geometrically, this tangency reflects the supporting-hyperplane characterization of the Legendre dual: $\lambda(\mu)$ is the slope of $\kappa$ at $\mu$, and the linear functional $q\mapsto\E_q[r]$ defining $\mathcal{M}_\mu$ is the corresponding supporting hyperplane to $\PP^{\kappa(\mu)}$ at $\pl[\lambda(\mu)]$. The trade-off is then immediate: any distribution with expected reward exceeding $\mu$ must have KL divergence to $\aa$ exceeding $\kappa(\mu)$. See Fig.~\ref{fig:iprojs-general} for an illustration.

\subsection{Attained bounds}

\paragraph{Attained upper bound.}
If $\Y_M:=\{y:r(y)=M\}\neq\varnothing$, then $\pp:=\aa(\cdot\mid\Y_M)$ satisfies:
\begin{align}
    &\pl\to\pp \qquad (\lambda\to+\infty),\\
    &\KL{\pp}{\pl}\to 0 \qquad (\lambda\to+\infty),\\
    &\KL{\pl}{\pp}=+\infty \qquad (\forall\,\lambda\in\R),\\
    &\KL{\pl}{\aa}\to A^*(M) = \KL{\pp}{\aa} = -\log\aa(\Y_M).
\end{align}
The proofs are analogous to Appendix~\ref{app:proof:convergence}: pointwise convergence follows from the explicit form of $\pl$, forward KL convergence from the asymptotics of $A(\lambda)$, and the reverse KL infinity from the support mismatch between $\pl$ (full support) and $\pp$ (supported on $\Y_M$).

\paragraph{Attained lower bound.}
Symmetrically, if $\Y_m:=\{y:r(y)=m\}\neq\varnothing$, then $p_{-*}:=\aa(\cdot\mid\Y_m)$ satisfies the analogous statements with $\lambda\to-\infty$ in place of $\lambda\to+\infty$ and the roles of $\pl$ and $p_{-*}$ exchanged in the reverse-KL statement:
\begin{align}
    &\pl\to p_{-*} \qquad (\lambda\to-\infty),\\
    &\KL{p_{-*}}{\pl}\to 0 \qquad (\lambda\to-\infty),\\
    &\KL{\pl}{p_{-*}}=+\infty \qquad (\forall\,\lambda\in\R),\\
    &\KL{\pl}{\aa}\to A^*(m) = \KL{p_{-*}}{\aa} = -\log\aa(\Y_m).
\end{align}
In the binary case both bounds are attained, and we recover Theorem~\ref{thm:main} (upper bound, $\pp=\aa(\cdot\mid\Y_1)$) together with its lower-bound counterpart involving $p_{-*}=\aa(\cdot\mid\Y_0)$.

\begin{figure}[H]
    \centering
    \includegraphics[width=0.75\linewidth]{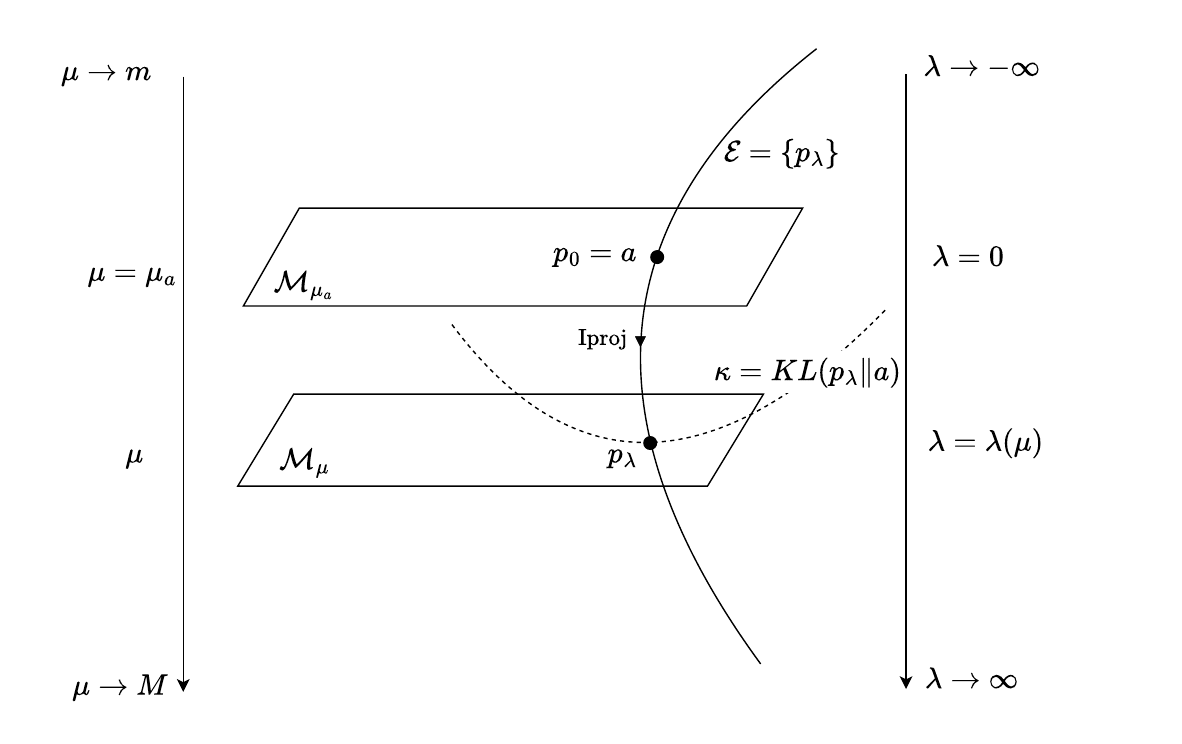}
    \caption{Information geometry of KL-control for a general bounded reward (bounds not necessarily attained).
The moment slices $\mathcal{M}_\mu$ are stacked vertically; the exponential family $\EE=\{\pl\}$ intersects each slice at the I-projection of $\aa$. The dashed curve labeled $\kappa$ shows the KL sublevel set $\PP^\kappa$ tangent to $\mathcal{M}_\mu$ at $\pl[\lambda(\mu)]$.
Fig.~\ref{fig:binary-info-geom} shows the same picture for the binary case, where both reward bounds are attained: the boundary slices $\mathcal{M}_0$ and $\mathcal{M}_1$ then contain the endpoint distributions $p_{-*}=\aa(\cdot\mid\Y_0)$ and $\pp=\aa(\cdot\mid\Y_1)$, which appear as additional limit points of the exponential family curve.}
    \label{fig:iprojs-general}
\end{figure}

\section{Toy Experiment: Additional Details}
\label{app:toy-details}

This appendix provides additional details and robustness checks for the toy experiment of Section~\ref{sec:toy-experiment}.

\subsection{Experimental setup}

The sample space is $\Y=V^T$ with $V=\{0,1,2\}$ and $T=3$, giving $|\Y|=27$. The verifier is $v(y_1,y_2,y_3)=\mathbf{1}[y_1=y_3]$, yielding $|\Y_1|=9$ valid sequences. The base model $\aa$ is a full trigram autoregressive model: $\aa(y)=\aa(y_1)\,\aa(y_2\mid y_1)\,\aa(y_3\mid y_1,y_2)$, where each conditional is a softmax over randomly generated logits (drawn i.i.d.\ from $\mathcal{N}(0,0.25)$).

The bigram policy class $\Pi_\Theta$ parametrizes $\pit(y)=\pit(y_1)\,\pit(y_2\mid y_1)\,\pit(y_3\mid y_2)$, where $\pit(y_3\mid y_2)$ does not depend on $y_1$. Each conditional is a softmax over trainable logits, giving $3+9+9=21$ real parameters. Since $|\Y|=27$, all expectations, KL divergences, and gradients are computed by exact enumeration (no sampling). Optimization uses gradient ascent on $\Jb(\pit)$ with learning rate $0.1$ and $8{,}000$ steps, initialized at the base model. Analytical gradients were verified against two-sided finite differences (relative error $<10^{-8}$ on all parameters).

The forward-KL-optimal bigram policy $\pihat$ is computed by minimizing $\KL{\pp}{\pit}$ over $\Pi_\Theta$ via gradient descent (learning rate $0.05$, $15{,}000$ steps). This minimization is convex in the logits: writing $-\log\pi_\theta(y)$ as a sum of negative log-softmax terms (one per conditional, each depending on a disjoint block of logits), and noting that each negative log-softmax, $-\log\mathrm{softmax}_i(z) = \mathrm{LSE}(z) - z_i$, is convex in its argument $z$, the cross-entropy $\KL{\pp}{\pi_\theta} = \mathrm{const} - \sum_y\pp(y)\log\pi_\theta(y)$ is convex in the full parameter vector as a sum of convex functions. Gradient descent therefore finds the global optimum.

The TVD-optimal bigram policy is estimated by minimizing $\TVD{\pit}{\pp}$ over $\Pi_\Theta$ via multi-restart gradient descent with finite-difference gradients ($200$ restarts, $5{,}000$ steps each, decaying learning rate). Unlike the forward-KL case, this problem is non-convex in the logits (TVD is convex on the simplex, but the bigram subset is not), so global optimality is not guaranteed; we treat the result as a best-effort lower bound on the achievable TVD within $\Pi_\Theta$.

\subsection{Robustness across base models}

\begin{figure}[H]
    \centering
    \includegraphics[width=0.75\linewidth]{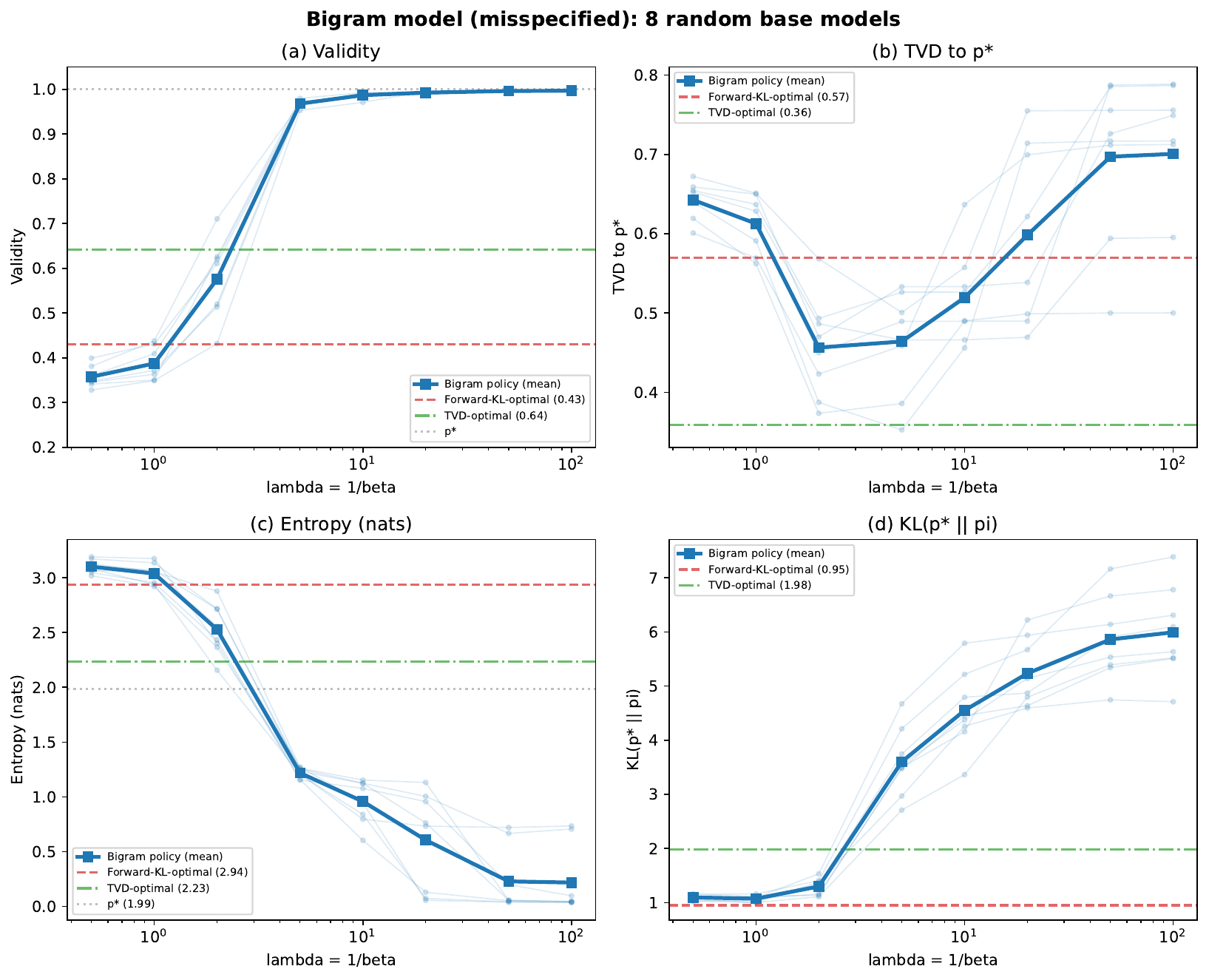}
    \caption{Robustness of mode collapse across $8$ random base models. Light curves show individual seeds; the heavy curve shows the mean. The dashed red line marks the mean forward-KL-optimal bigram policy $\pihat$; the dash-dotted green line marks the mean TVD-optimal bigram policy (best-effort). Both reference lines are \emph{means} across seeds, so individual seed trajectories need not lie above them uniformly---a seed whose own per-seed baseline happens to be low may cross the mean line. The overall pattern---rising validity, rising forward KL from $\pp$, collapsing entropy, and TVD to $\pp$ ending far above the reference policies---is consistent across all seeds. The TVD dip visible in some seeds at intermediate $\lambda$ reflects the transient effect discussed in Appendix~\ref{app:tvd-dip}.}
    \label{fig:toy-multiseed}
\end{figure}

Figure~\ref{fig:toy-multiseed} repeats the experiment of Section~\ref{sec:toy-experiment} for $8$ independently generated random base models (i.e., different random logits for $\aa$). The base-model validity $\Aone=\aa(\Y_1)$ ranges from $0.25$ to $0.42$ across seeds. The mode-collapse pattern is consistent across all seeds: validity approaches $1$, entropy collapses toward $0$, and the forward KL from $\pp$ increases, confirming that the phenomenon is not an artifact of a particular choice of base model.

At $\lambda=50$ ($\beta=0.02$), the mean validity across seeds is $0.996\pm 0.001$, the mean TVD to $\pp$ is $0.70\pm 0.09$, the mean entropy is $0.23\pm 0.27$, and the mean $\KL{\pp}{\pit}$ is $5.86\pm 0.73$. By contrast, the forward-KL-optimal bigram policy $\pihat$ achieves a mean validity of only $0.43$ but a mean $\KL{\pp}{\pihat}$ of $0.95$, and the TVD-optimal policy achieves a mean $\TVD{\pit}{\pp}$ of $0.36$---both substantially closer to $\pp$ than the KL-control solution at any large $\lambda$. The KL-control objective thus drives the optimizer to a policy that is worse than either reference by every measure of proximity to $\pp$, despite both references being achievable within the same bigram family.

\subsection{The TVD dip at intermediate $\lambda$}
\label{app:tvd-dip}

The TVD to $\pp$ exhibits a transient dip around $\lambda\approx 2$--$5$ (visible in both Figures~\ref{fig:toy-mode-collapse} and~\ref{fig:toy-multiseed}), while the forward KL from $\pp$ increases monotonically. This discrepancy reflects the different sensitivities of the two metrics.

At intermediate $\lambda$, the policy is transferring mass from $\Y_0$ to $\Y_1$, which mechanically reduces the TVD to $\pp$ (since $\pp$ is supported on $\Y_1$). However, the mass arriving on $\Y_1$ is not distributed like $\pp$---it is already concentrating on a few valid sequences. The TVD does not yet penalize this heavily because the dominant contribution (mass on $\Y_0$) is decreasing faster than the secondary contribution (misallocation within $\Y_1$) is increasing. Once the mass transfer is essentially complete ($\lambda\gtrsim 5$), the misallocation within $\Y_1$ dominates and the TVD rises sharply.

The forward KL, $\KL{\pp}{\pit}$, detects the misallocation earlier because it is more sensitive to regions where $\pp(y)$ is substantial but $\pit(y)$ is small: each such region contributes a term $\pp(y)\log(\pp(y)/\pit(y))$, which grows rapidly as $\pit(y)$ decreases. As a result, the forward KL increases monotonically throughout the sweep, even during the phase when TVD temporarily improves.

\end{document}